\newtheorem{theorem}{Theorem}
\newtheorem{lemma}{Lemma}
\newtheorem{assumption}{Assumption}
\newtheorem{definition}{Definition}
\title{\bf {\LARGE On Transformations in Stochastic Gradient MCMC}}
\author{{\bf Soma Yokoi$^{1,2}$,~ ~  Takuma Otsuka$^{3}$,~ ~  Issei Sato$^{1,2}$}\\
1~ The University of Tokyo\\
2~ RIKEN\\
3~ NTT Communication Science Laboratories, NTT Corporation
}
\date{}
\begin{document}

\maketitle

\begin{abstract}
Stochastic gradient Langevin dynamics (SGLD) is a computationally efficient sampler for Bayesian posterior inference given a large scale dataset.
Although SGLD is designed for unbounded random variables, many practical models incorporate variables with boundaries such as non-negative ones or those in a finite interval.
To bridge this gap, we consider mapping unbounded samples into the target interval.
This paper reveals that several mapping approaches commonly used in the literature produces erroneous samples from theoretical and empirical perspectives. 
We show that the change of random variable using an invertible Lipschitz mapping function overcomes the pitfall as well as attains the weak convergence.
Experiments demonstrate its efficacy for widely-used models with bounded latent variables including Bayesian non-negative matrix factorization and binary neural networks.
\end{abstract}

\section{Introduction} \label{sec:intro}
Sampling a random variable from a given target distribution is a key problem in Bayesian inference.
In this study, we discuss the problem of drawing samples from a target distribution on a bounded domain using the Langevin Monte Carlo (LMC) algorithm.
More precisely, let $\theta \sim \pi_\theta(\theta)$ be the target random variable in constrained state space $\mathbb{R}_c$ and $\varphi \sim \pi(\varphi)$ be a proxy random variable in $\mathbb{R}$. 
While we are interested in sampling from $\pi_\theta(\theta)$, LMC is unsuitable for directly handling such constrained random variables because its diffusion is prone to overstep the boundary. 
Thus, we discuss the following two-step LMC algorithm:
\begin{equation}\label{equ:target_algo}
    \varphi_{t+1} = \varphi_t + \epsilon \widehat{\nabla} \log \pi(\varphi_t) + \sqrt{2\epsilon} \eta_t, \ \ \ \
    \theta_{t+1} = f(\varphi_{t+1}),
\end{equation}
where $f$ is a transform function that maps the proxy to the target domain and $\widehat{\nabla}$ denotes an unbiased stochastic gradient operator.
This kind of algorithm is often employed when $\theta$ is difficult to directly sample.
For example, when $\theta$ is non-negative, the exponential function is adopted as mapping $f$.

The target $\pi_\theta(\theta)$ can be complex (e.g. neural networks) and dataset can become very large.
This forces us to comply with the following requirements.
First, $\widehat{\nabla} \log \pi(\varphi)$ must be designed such that the resultant distribution of $\theta$ should match $\pi_\theta (\theta)$ through the chosen mapping $f$.
Second, we have to avoid iterative evaluations of the whole dataset.
This means that 1) we use stochastic gradient with minibatch, and that 2) we omit the Metropolis-Hastings rejection step to avoid performance overhead as in previous studies \citep{Welling:2011aa} \citep{Sato:2014xy} \citep{Teh:2016aa}.
Both approximations introduce sampling errors as shown in Figure\,\ref{fig:basic_compare}.
Thus the sampling accuracy must be guaranteed by discretization analysis, instead of confirming the detailed balance of Markov chain.

The following three algorithms conforming to Eq.\,\eqref{equ:target_algo} are discussed in this paper.
\begin{itemize}
    \item Mirroring trick (Section\,\ref{sec:mirror}): heuristics employed in \cite{Patterson:2013kq}, simply matching the domain e.g. $f(x)=|x|$ for non-negative $x$ assuming $\widehat{\nabla} \log \pi(\varphi) = \widehat{\nabla} \log \pi_\theta(\theta)$.
    \item It\^o formula (Section\,\ref{sec:Ito}): transformation $f$ in stochastic differential equation~(SDE), obtaining $\widehat{\nabla} \log \pi(\varphi)$ by It\^o formula, the chain rule in stochastic calculus.
    \item Change of random variable (CoRV) (Section\,\ref{sec:CoRV}): transformation $f$ in random variable,
    obtaining $\widehat{\nabla} \log \pi(\varphi)$ from that of $\theta$ by Jacobian.
\end{itemize}
It turns out that there are theoretical and empirical problems with straightforward use of these methods.
Mirroring trick suffers from inaccurate sampling near a boundary and no theoretical guarantee with stochastic gradient.
It\^o formula almost surely diverges near a boundary and causes a stepsize issue.
Only CoRV with Lipschitz $f$ gives good results both theoretically and empirically.

\citet{Brosse:2017aa} developed another line of research for an LMC algorithm for a random variable on a convex body.
They employed proximal MCMC~\citep{Pereyra:2016:PMC:2967713.2967757} \citep{Durmus:2018ab} with the Moreau-Yosida envelope, which find a well-behaved regularization of the target density on a convex body so that it preserves convexity and Lipschitzness.
The sampling distribution is, nevertheless, an unbounded approximation of the target distribution and it still draws samples from outside the domain.
The limitation of log-concavity and the computing cost of proximal operator at each sample prevent its application to large datasets as well as complex models such as neural networks.

\paragraph{Contribution.}
The contribution of this paper on sampling bounded random variables using a stochastic gradient-based sampler is twofold. 
\begin{itemize}
  \item We reveal that common practices used in the literature (the mirroring trick and the application of It\^o formula) have pitfalls from empirical and theoretical points of view.
  \item We guarantee that the CoRV approach has the stationary distribution (Theorem\,\ref{thm:proxy_stationary}) and weak convergence (Theorem\,\ref{thm:proxy_weak}) with a mild condition on the transform function (Assumption\,\ref{asm:transform_f}).
\end{itemize}

\begin{figure*}[t]\begin{center}
  \includegraphics[width=0.32\linewidth]{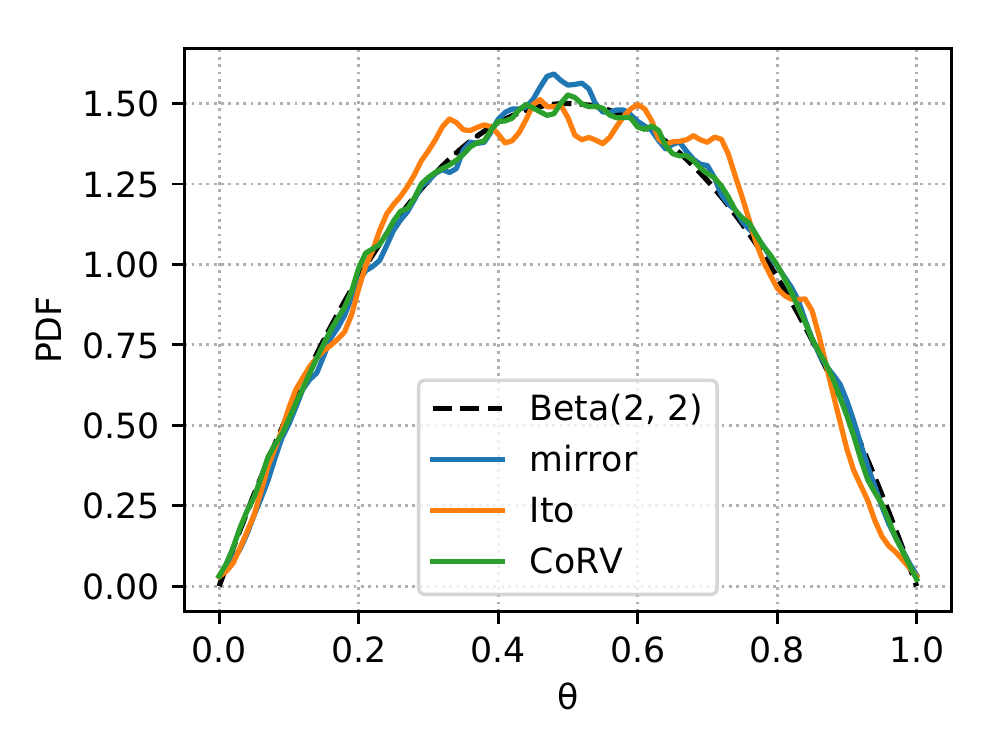}
  \includegraphics[width=0.32\linewidth]{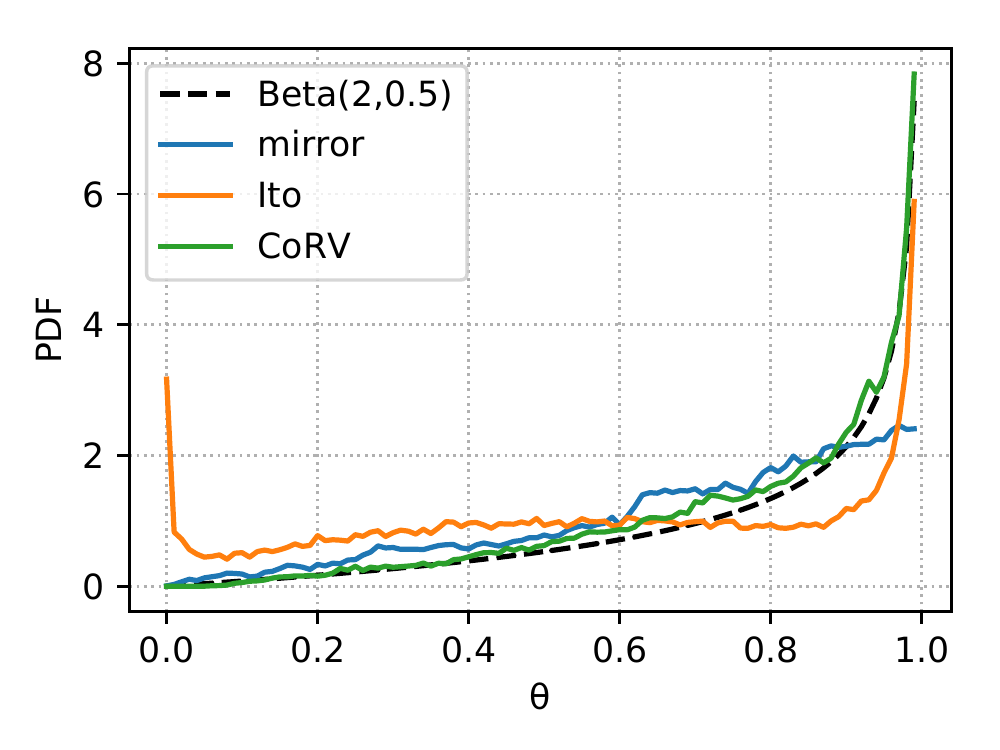}
  \includegraphics[width=0.32\linewidth]{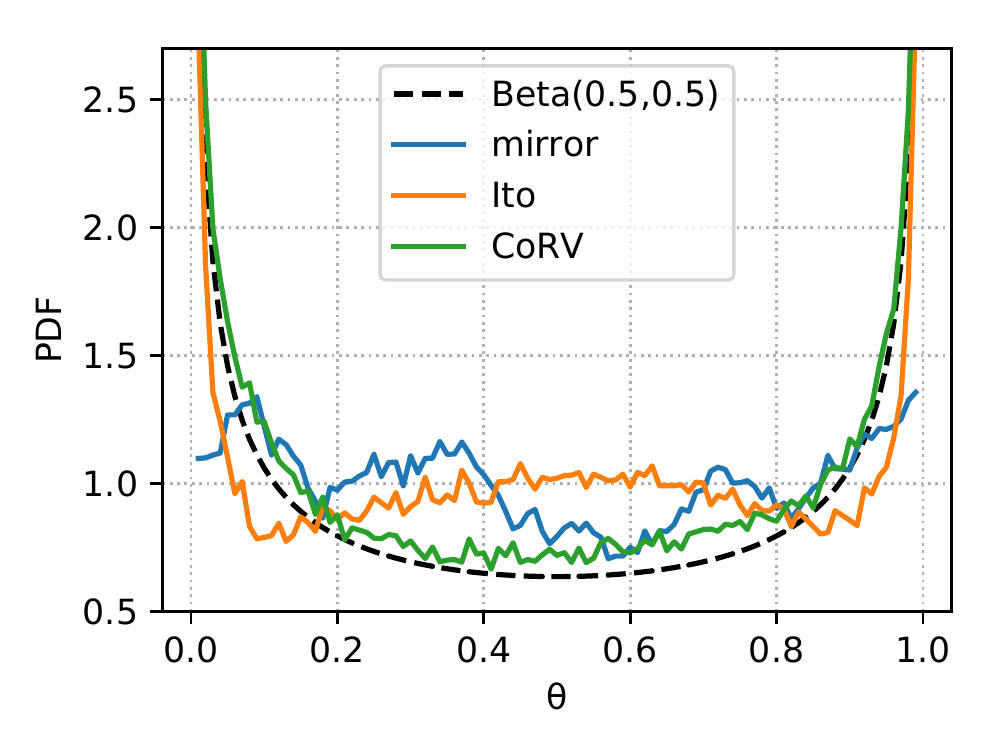}
  \includegraphics[width=0.32\linewidth]{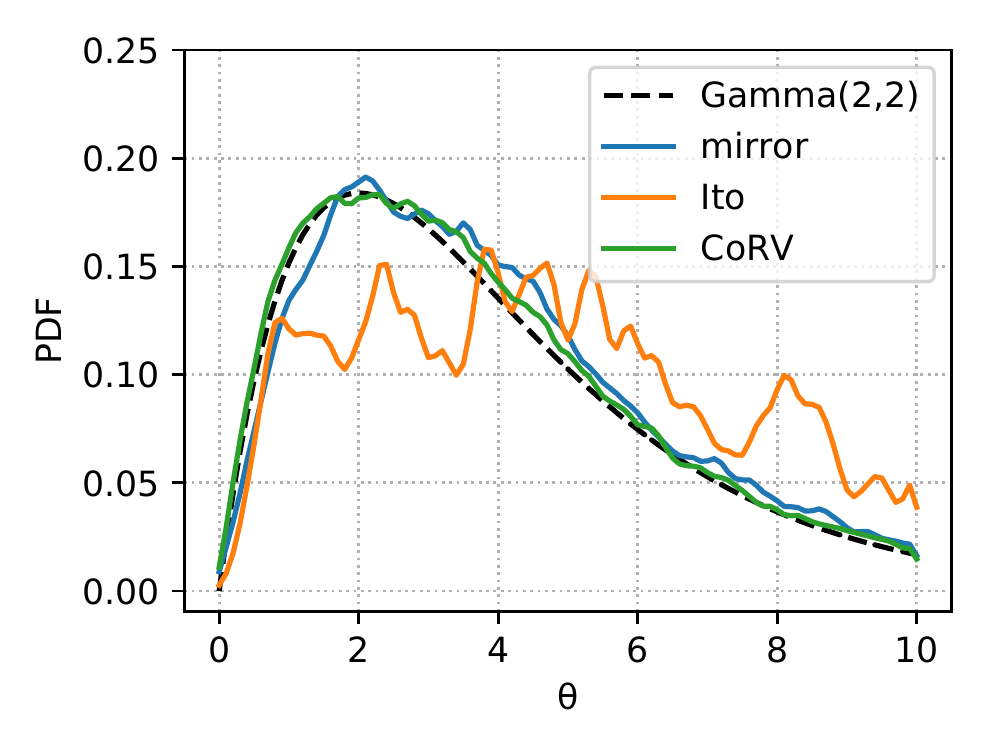}
  \includegraphics[width=0.32\linewidth]{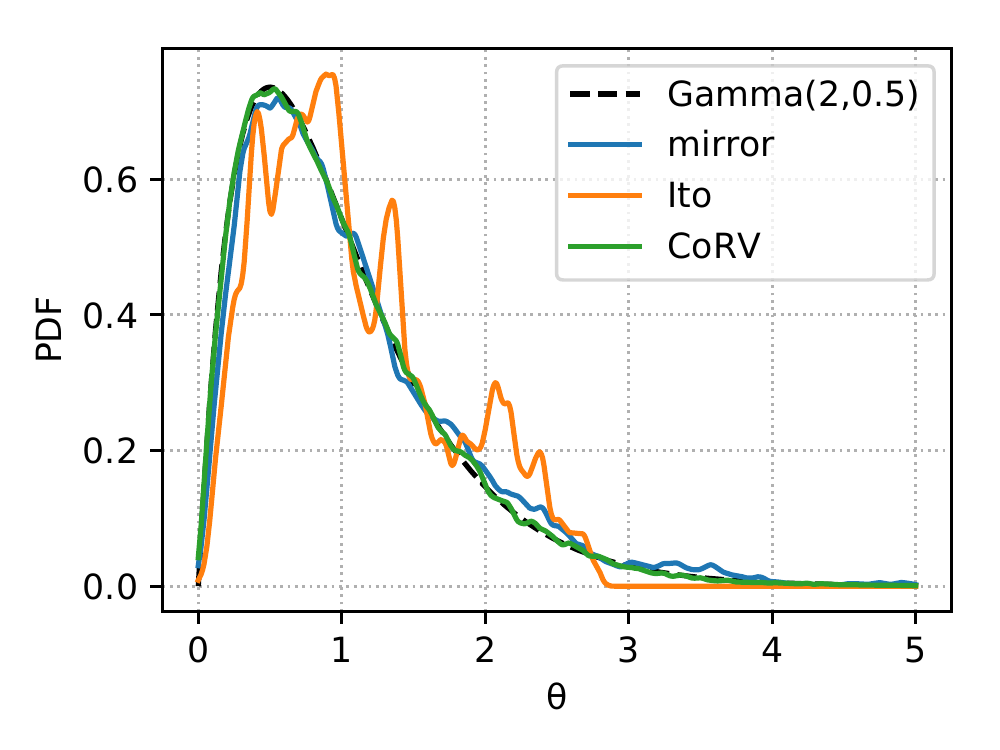}
  \includegraphics[width=0.32\linewidth]{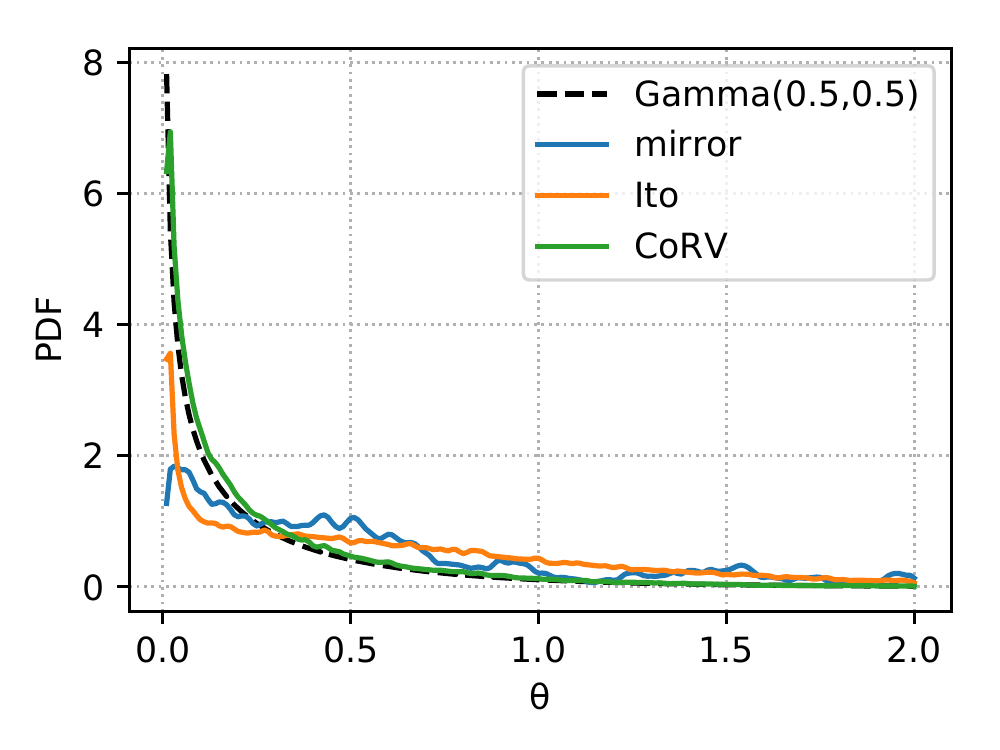}
  \includegraphics[width=0.32\linewidth]{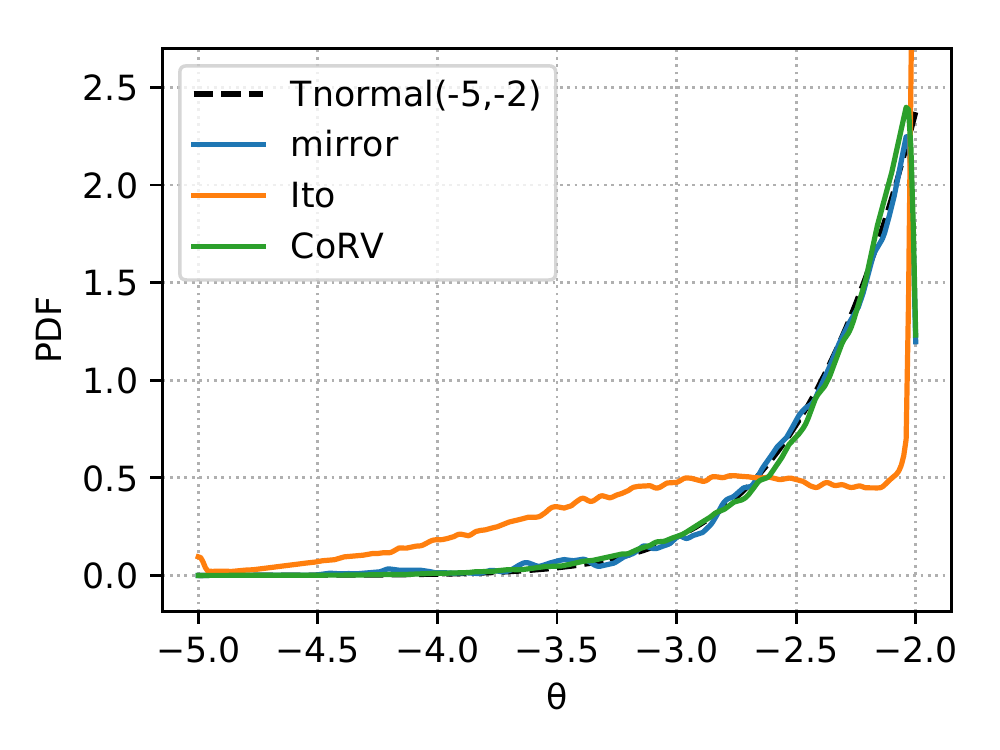}
  \includegraphics[width=0.32\linewidth]{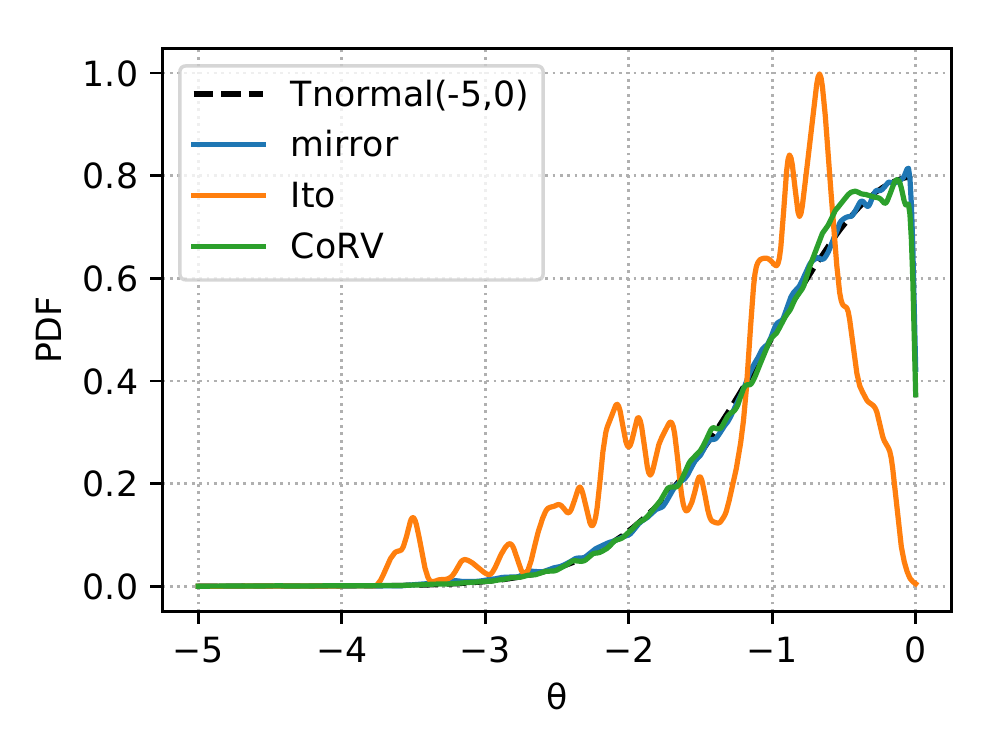}
  \includegraphics[width=0.32\linewidth]{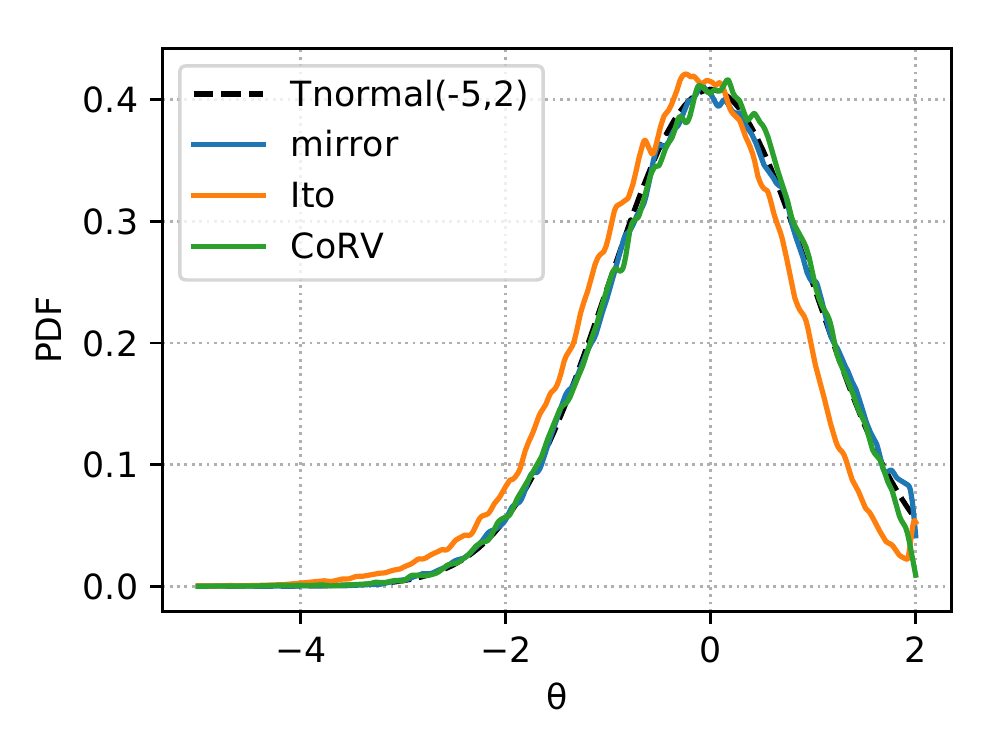}
  \caption{
    Sampling results from the beta, gamma, and truncated standard normal distributions with $n=100,000$ samples for each method.
    The mirroring trick~(\texttt{mirror}) often fails at the distributions with high density on their boundaries.
    The It\^o formula~(\texttt{Ito}) suffers from instability near boundaries as well as slow mixing due to a small stepsize.
    The change-of-random-variable formulation~(\texttt{CoRV}) works appropriately for these distributions.
    The stochastic gradients were emulated by adding Gaussian noise to the exact gradients.
    The stepsize was chosen by the tree-structured Parzen estimator\,(TPE)\,\citep{NIPS2011_4443} to maximize similarities between the true density functions and the histograms.
  }
  \label{fig:basic_compare}
\end{center}\end{figure*}

\section{Review: stochastic gradient Langevin dynamics} \label{sec:sgld}
This section reviews the SGLD algorithm in unconstrained state space.
Our notation uses a one-dimensional parameter for simplicity.
An extension to multi-dimensional cases is straightforward.

Consider a target potential $U_\theta(\theta)$ such that its Gibbs distribution is the target distribution $\pi_\theta(\theta) \propto \exp (-U_\theta(\theta))$.
We discuss an It\^o process described by the following SDE
\begin{equation} \label{equ:Ito_process}
  d\theta(t) = - U'_\theta(\theta)dt + \sqrt{2} dW(t),
\end{equation}
where $U'_\theta(\theta) = \frac{d}{d\theta} U_\theta(\theta)$ and $W(t)$ denotes the Wiener process.
By applying the first order Euler-Maruyama discretization and stochastic approximation, the SGLD algorithm is derived
\begin{equation} \label{equ:SGLD}
  \theta_{t+1} = \theta_t - \epsilon_t \widehat{U}'_\theta(\theta_t) + \sqrt{2 \epsilon_t} \eta_t, \ \ \ \eta_t \sim \mathcal{N}(0,1),
\end{equation}
where $\mathcal{N}(0,1)$ is the standard Gaussian distribution and $\epsilon_t>0$ is stepsize.
SGLD also enjoys computational gain by omitting a Metropolis-Hastings rejection step which ordinary MCMC methods usually runs to ensure detailed balance. 

Due to the approximation of gradient and exclusion of rejection step, SGLD may not necessarily satisfy the detailed balance of the Markov chain. 
Instead, the weak convergence with regard to SDE\,\eqref{equ:Ito_process} have been discussed in the literature \citep{Sato:2014xy} \citep{Teh:2016aa}.
Let stochastic gradient satisfy the following assumtion.
\begin{assumption}[gradient error] \label{asm:delta_SGLD}
  The stochastic gradient $\widehat{U}'_\theta(\theta)$ is written by using the accurate gradient $U'_\theta(\theta)$ and the error $\delta$ as
  \begin{equation} \label{equ:stoc_appr}
    \widehat{U}'_\theta(\theta) = U'_\theta(\theta) + \delta,
  \end{equation}
  where $\delta$ is white noise or the Wiener process of zero mean and finite variance satisfying
    \begin{equation}
      \mathbb{E}_{S}[\delta] = 0, \ \ \ \ \ \mathbb{E}_{S}[|\delta|^l] < \infty,
    \end{equation}
    for some integer $l \geq 2$.
  $\mathbb{E}_{S}$ denotes the expectation over sampling set $S$.
\end{assumption}

Then the following theorem holds for the sample sequence $\{\theta_t\}_{t=1}^T$.
In short, the weak convergence states that the discretization error of SGLD becomes zero in expectation for any fixed time where the time increment approaches zero.
\begin{definition}[weak convergence~\citep{Iacus:2008aa}]
  Let $Y_\epsilon$ be a time-discretized approximation of a continuous-time process $Y$ and $\epsilon_0$ be the maximum time increment of the discretization.
    $Y_\zeta$ is said to converge weakly to $Y$ if for any fixed time $T$ and any continuous differentiable and polynomial growth function $h$ and constant $\epsilon_0>0$, it holds true that
    \begin{equation}
      \lim_{\epsilon \to 0} \left|\mathbb{E}[h(Y_\epsilon(T))] - \mathbb{E}[h(Y(T))]\right| = 0, \ \ \ \ \forall \epsilon < \epsilon_0.
    \end{equation}
\end{definition}

\section{Mirroring trick} \label{sec:mirror}
Although many studies have been carried out for LMC and SGLD defined on real space $\mathbb{R}$, theoretical analysis in the finite interval $\mathbb{R}_c$ remains unsolved.
The difficulty comes from that the LMC algorithm is an Euler-Maruyama discretization of an It\^o process whose equilibrium is a target distribution on $\mathbb{R}$.
This is problematic in multiple applications where we handle latent random variables in a bounded domain, such as latent Dirichlet allocation~\citep{Blei:2003ly} where $\theta$ lies in a probability simplex, non-negative matrix factorization~\citep{Cemgil:2009:BIN:1592511.1592515} with all elements of $\theta$ being non-negative, and binary neural networks~\citep{Courbariaux:2015aa} \citep{NIPS2016_6573} with $\theta \in (-1, 1)$.

The mirroring trick is one of the straightforward heuristics to cope with this problem.
This trick sends back outgoing samples at the domain boundaries so as not to overstep the constraint.
\citet{Patterson:2013kq} employed it to sample from a Gamma distribution defined on $\mathbb{R}_+$, simply taking the absolute value of the generated sample.
There is no convergence guarantee for this trick, because it assumes that $\widehat{\nabla} \log \pi(\varphi) = \widehat{\nabla} \log \pi_\theta(\theta)$ and transformation $f$ does not change the equilibrium.
The heuristics is partially justified by \citet{Bubeck:2015aa} and \citet{Bubeck:2018aa}.
They extended the LMC algorithm with accurate gradients to an SDE with a reflecting boundary condition.
Their stochastic process defined on a convex body, called reflected Brownian motion, is discretized into an LMC algorithm accompanied by the mirroring trick.
This interpretation helps its theoretical investigation.
However, \citet{Bubeck:2018aa} also stated that the extension of their result to SGLD with stochastic gradients is an open problem for future work.

Our preliminary experiments show that the mirroring trick empirically suffers from inaccurate sampling near the boundaries.
Figure\,\ref{fig:basic_compare} (see \texttt{mirror}) indicates that the mirroring trick fails to capture the distribution especially when the density is sparse, or concentrated at boundaries.
This implies that the sampling may be inaccurate when the model uses a sparse prior that is often employed to avoid overfitting.
This disadvantage forces us to set a very small stepsize for accurate sampling near the boundary, which results in a large performance degradation in experiments in Section\,\ref{sec:experiment}.

\section{It\^o formula} \label{sec:Ito}
Here we consider the following two-step modification: first, we use the It\^o formula to construct the SDE in the unconstrained domain with the corresponding transform function.
Then, the SDE is discretized to obtain the desired algorithm.
While this derivation is straightforward and theoretically appreciated, we later show that this transformation inherits an instability near the boundary. 

We begin by transforming the following It\^o process of $\theta(t)$,
\begin{equation}
  d\theta(t) = a(t,\theta)dt + b(t,\theta) dW(t).
\end{equation}
Let $g:\mathbb{R}_c \to \mathbb{R}$ be a smooth invertible function from a bounded target variable $\theta \in \mathbb{R}_c$ to an unbounded proxy variable $\varphi \in \mathbb{R}$.
$\mathbb{R}_c$ is constrained state space, e.g. finite or semi-infinite interval for $\mathbb{R}$.
We consider a new stochastic process $\varphi(t)$ defined by
\begin{equation}
  \varphi(t) = g(\theta(t)).
\end{equation}
From the It\^o formula (Theorem\,\ref{theorem:ito_formula} in Appendix\,\ref{app:ito_formula}), $\varphi(t)$ is also an It\^o process of
\begin{equation}
    d\varphi(t) = \left\{a(t,\theta) g'(\theta(t)) + \frac{b^2}{2} g''(\theta(t)) \right\} dt + b(t,\theta) g'(\theta(t)) dW(t).
\end{equation}
Letting $a(\theta) = - U'_\theta(\theta)$ and $b = \sqrt{2}$, discretizing the process results in the following LMC
\begin{equation} \label{equ:naive_sampling}
  \varphi_{t+1} =  \varphi_t + \epsilon\left(- g'(\theta_t)U'_\theta(\theta_t) + g''(\theta_t)\right) + \sqrt{2\epsilon} g'(\theta_t) \eta.
\end{equation}
While a general connection between SDE and LMC is discussed by \citet{Ma:2015:CRS:2969442.2969566}, this algorithm is distinct in that the transform step $\theta = g^{-1}(\varphi)$ is employed to keep samples in the target domain.

Unfortunately, Eq.\,\eqref{equ:naive_sampling} is likely to draw inaccurate samples.
Figure\,\ref{fig:basic_compare} demonstrates that this method (labeled as \texttt{Ito}) fails to track the target density.
We attribute this phenomenon to the intrinsic instability around the boundary regardless of the target potential and the transform function.

To theoretically discuss this instability, we first assume the following class of transform functions.
\begin{assumption}[transform function] \label{asm:transform_f}
    Let $f$ be a Lipschitz and monotonically increasing function.
    Namely, for any $\varphi \in \mathbb{R}$, there exists constant $L>0$ such that
    \begin{equation}
        0 \leq f'(\varphi) \leq L.
    \end{equation}
    The boundary value of target domain denoted by $\partial S$ corresponds to the infinity in the proxy space: $\lim_{\varphi\to\infty} f(\varphi) = \partial S$, and $\lim_{\varphi\to\infty} f'(\varphi)$ exists.
\end{assumption}
All functions in Table~\ref{tab:transform} satisfy this assumption except the exponential.
Depending on the constraints in the target domain, $f$ may be a decreasing or upper- and lower-bounded function.
Though our discussion also applies to these cases in the same way, we continue with Assumption\,\ref{asm:transform_f} for simplicity.

Then the instability of the algorithm is shown in the following theorem.
\begin{theorem}[instability of the It\^o transformation] \label{thm:naive_unstable}
    Let $f=g^{-1}: \mathbb{R}\to\mathbb{R}_c$ satisfy Assumption~\ref{asm:transform_f}.
    Then for any $\epsilon>0$, and $U'_\theta(\theta)$, and any $\theta \in S$ approaching $\partial S$ from the inside, the single-step difference of the It\^o transformation method diverges almost surely:
    \begin{equation}
        \lim_{\theta\to\partial S} |\varphi_{t+1} - \varphi_t| = \infty
    \end{equation}
\end{theorem}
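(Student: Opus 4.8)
The plan is to exploit the inverse-function relationship between $f$ and $g$ to show that the coefficient $g'(\theta_t)$ multiplying both the drift and the noise in Eq.\,\eqref{equ:naive_sampling} blows up at the boundary, and then to argue that the noise term forces the single step to diverge almost surely. First I would write the single-step difference explicitly,
\begin{equation*}
  \varphi_{t+1}-\varphi_t = \epsilon\bigl(-g'(\theta_t)U'_\theta(\theta_t)+g''(\theta_t)\bigr)+\sqrt{2\epsilon}\,g'(\theta_t)\,\eta,
\end{equation*}
and record the identity that follows from $f=g^{-1}$: differentiating $f(g(\theta))=\theta$ gives $g'(\theta)=1/f'(g(\theta))$, so with $\varphi_t=g(\theta_t)$ we have $g'(\theta_t)=1/f'(\varphi_t)$. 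Note also that $0<g'(\theta_t)$ is well defined since $f'\le L<\infty$ bounds $g'$ away from being undefined.

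Next I would establish that $g'(\theta_t)\to\infty$ as $\theta_t\to\partial S$. By Assumption\,\ref{asm:transform_f} the boundary corresponds to $\varphi_t=g(\theta_t)\to\infty$, and near this finite boundary $f$ is monotone increasing, bounded above (its range approaches the finite value $\partial S$), and has a limiting derivative $\lim_{\varphi\to\infty}f'(\varphi)$. A bounded monotone function whose derivative has a limit must have that limit equal to $0$: otherwise $f'(\varphi)\ge c>0$ for all large $\varphi$ would force $f$ to grow at least linearly and hence be unbounded, contradicting $f(\varphi)\to\partial S<\infty$. Therefore $f'(\varphi_t)\to0$ and $g'(\theta_t)=1/f'(\varphi_t)\to\infty$.

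Finally I would factor $g'(\theta_t)$ out of the whole difference,
\begin{equation*}
  \varphi_{t+1}-\varphi_t = g'(\theta_t)\left[\epsilon\left(-U'_\theta(\theta_t)+\frac{g''(\theta_t)}{g'(\theta_t)}\right)+\sqrt{2\epsilon}\,\eta\right],
\end{equation*}
and argue divergence of the product. The bracketed factor is the sum of a deterministic quantity $k(\theta_t):=\epsilon\bigl(-U'_\theta(\theta_t)+g''(\theta_t)/g'(\theta_t)\bigr)$, which does not depend on $\eta$, and the random term $\sqrt{2\epsilon}\,\eta$. If $k(\theta_t)$ diverges, the bracket diverges for every fixed $\eta$; if $k(\theta_t)$ tends to a finite limit $k^\ast$, the bracket tends to $k^\ast+\sqrt{2\epsilon}\,\eta$, which is nonzero for every value of $\eta$ except the single exceptional value $\eta=-k^\ast/\sqrt{2\epsilon}$, occurring with probability zero. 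In either case, for almost every realization of $\eta$ the bracket is bounded away from $0$ in the limit while $g'(\theta_t)\to\infty$, so $|\varphi_{t+1}-\varphi_t|\to\infty$ almost surely, for any $\epsilon>0$ and any $U'_\theta$.

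I expect the delicate point to be the possible cancellation between the deterministic drift and the noise: because the drift coefficient also carries the exploding factor $g'(\theta_t)$, one must rule out a fine-tuned scenario in which the bracket collapses to $0$ at exactly the rate $1/g'(\theta_t)$. The measure-zero exclusion above covers the generic case, but to make it fully rigorous one should control the boundary asymptotics of $g''/g'=-f''/(f')^2$ for the transform functions of interest (e.g.\ $\tanh$, sigmoid, softplus) so as to exclude pathological oscillation of $k(\theta_t)$ that could repeatedly graze the exceptional value along a subsequence. Establishing that $k(\theta_t)$ has definite limiting behaviour for the admissible class of $f$ is the main obstacle; once that is in hand, the divergence of $g'(\theta_t)$ does the rest of the work.
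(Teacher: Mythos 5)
Your proposal follows essentially the same route as the paper's own proof: establish that $f'(\varphi)\to 0$ as $\varphi\to\infty$ (the paper's Lemma on the limit of the transform derivative, proved there via L'H\^opital's rule rather than your boundedness-contradiction argument, both valid), conclude $|g'(\theta)| = 1/|f'(\varphi)| \to \infty$ at the boundary, and then argue that the exploding coefficient on the Gaussian term forces almost-sure divergence of the single-step difference. If anything, your handling of the final step is more careful than the paper's, which dismisses the cancellation issue with the single sentence that the factor $g'(\theta)$ ``almost surely dominates this quantity''; the oscillation pathology you flag --- the bracket $k(\theta_t)+\sqrt{2\epsilon}\,\eta$ grazing zero along a subsequence at rate $1/g'(\theta_t)$ for a positive-probability set of $\eta$ --- is a genuine loose end, but it is equally unaddressed in the paper's proof, so your proposal meets and indeed exceeds the paper's own standard of rigor.
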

Please refer to Appendix\,\ref{app:proof} for all the proofs in this paper.

It suggests that the stepsize must be small enough to cope with this instability, but it would make the sampling substantially slow to mix.

\section{Change of random variable} \label{sec:CoRV}
We thus introduce another formulation to employ a transformation step in LMC.
The derivation methodology is the opposite of the It\^o transformation; we begin with a discretized algorithm and then consider the corresponding continuous-time SDE.
This SDE representation is used to derive Theorem\,\ref{thm:proxy_weak}, which guarantees the sampling accuracy of the method without a rejection step.
In addition, this algorithm overcomes the instability issue by Theorem\,\ref{thm:CoRV_stable} unlike the former It\^o method.

Let function $f: \mathbb{R} \to \mathbb{R}_c$ be a twice differentiable monotonic function from an unbounded proxy variable $\varphi \in \mathbb{R}$ to a bounded target variable $\theta \in \mathbb{R}_c$
\begin{equation} \label{equ:transform_back}
  \theta = f(\varphi),
\end{equation}
then the target density $\pi_\theta(\theta)$ and the proxy density $\pi(\varphi)$ are known to have the following relation,
\begin{equation}\label{equ:transform_dist}
  \pi(\varphi) = \pi_\theta(\theta) \left|f'(\varphi)\right|.
\end{equation}
For the proxy potential $U(\varphi) \propto - \log \pi(\varphi)$, proxy $U'(\varphi)$ is represented by given target $U'_\theta(\theta)$:
\begin{equation} \label{equ:convert_potential}
    U'(\varphi) = f'(\varphi) U'_\theta(\theta) - \frac{f''(\varphi)}{f'(\varphi)}.
\end{equation}
One can enjoy the computational gain using the stochastic gradient $\widehat{U}'_\theta$, and construct the SGLD algorithm for the proxy variable:
\begin{equation} \label{equ:propose_update}
  \varphi_{t+1} = \varphi_t - \epsilon_t \left( f'(\varphi_t) \widehat{U}'_\theta(\theta_t) - \frac{f''(\varphi_t)}{f'(\varphi_t)} \right) + \sqrt{2\epsilon} \eta_t.
\end{equation}
We call this algorithm change-of-random-variable (CoRV) SGLD.
CoRV SGLD forms a generalized class of samplers that contains the ordinary SGLD.
Indeed, we recover SGLD by using the identity function as the transform $f(\varphi) = \varphi$.
CoRV SGLD satisfies the following advantages.
\begin{itemize}
    \item The algorithm is computationally efficient.
    Equation\,\eqref{equ:propose_update} requires to iterate over minibatch.%
    \item The samples are always in the target constrained space $\mathbb{R}_c$.
    Equation\,\eqref{equ:propose_update} generates a proxy sample $\varphi_t \in \mathbb{R}$ and then Eq.\,\eqref{equ:transform_back} transforms it into a target sample $\theta_t \in \mathbb{R}_c$.
    \item Any transform functions $f$ can be employed in Eq.\,\eqref{equ:propose_update} if it is twice differentiable monotonic and $\frac{f''(\varphi)}{f'(\varphi)}$ exists.
    Many common functions satisfy this condition, such as exponential, sigmoid, and softmax functions.
\end{itemize}

\subsection{Stability}
The following theorem explains the stability of CoRV SGLD by showing that the transformation does not cause an abrupt movement in the dynamics.
\begin{theorem}[stability of CoRV] \label{thm:CoRV_stable}
    Let transform function $f$ satisfy Assumption~\ref{asm:transform_f}.
    Then for a gradient error $\delta_\varphi$ and for any $\theta \in S$ approaching $\partial S$ from the inside, we have:
    \begin{equation}
        \lim_{\theta \to \partial S} \delta_\varphi = 0.
    \end{equation}
\end{theorem}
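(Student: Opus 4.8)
The plan is to write the proxy-space gradient error $\delta_\varphi$ explicitly in terms of the target-space error $\delta$ from Assumption~\ref{asm:delta_SGLD}, and then show that the Lipschitz structure of $f$ forces its derivative to vanish at the boundary, so that the error is damped. First I would identify $\delta_\varphi$. Comparing the stochastic drift used in the CoRV update Eq.~\eqref{equ:propose_update} with the exact proxy gradient Eq.~\eqref{equ:convert_potential}, the only difference is that $U'_\theta(\theta_t)$ is replaced by $\widehat{U}'_\theta(\theta_t) = U'_\theta(\theta_t) + \delta$. Substituting this and subtracting the exact drift, the $-f''/f'$ term cancels and the error in the proxy space becomes
\begin{equation}
    \delta_\varphi = f'(\varphi)\,\delta .
\end{equation}

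Next I would establish that $\lim_{\varphi\to\infty} f'(\varphi) = 0$. By Assumption~\ref{asm:transform_f}, $f$ is monotonically increasing with $0 \leq f'(\varphi) \leq L$, its range is the bounded target domain so that $\lim_{\varphi\to\infty} f(\varphi) = \partial S$ is finite, and the limit $\lim_{\varphi\to\infty} f'(\varphi)$ is assumed to exist. If this limit were some $c>0$, then $f(\varphi)$ would grow at least linearly for large $\varphi$ and diverge, contradicting the boundedness of its range. Hence the limit must be zero.

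Finally I would combine the two facts. Because $f$ is monotone with $f(\varphi)\to\partial S$, the condition $\theta\to\partial S$ is equivalent to $\varphi\to\infty$. Assumption~\ref{asm:delta_SGLD} gives $\mathbb{E}_{S}[|\delta|^l]<\infty$, so $\delta$ is almost surely finite, and therefore $\lim_{\theta\to\partial S}\delta_\varphi = \left(\lim_{\varphi\to\infty} f'(\varphi)\right)\delta = 0$. I expect the main obstacle to be the middle step: rigorously concluding $f'\to 0$ from monotonicity, boundedness of the range, and the mere \emph{existence} of the limit of $f'$. A bounded increasing function need not have a convergent derivative in general, and it is precisely the extra hypothesis in Assumption~\ref{asm:transform_f} that $\lim_{\varphi\to\infty} f'(\varphi)$ exists which rules out oscillation and lets the linear-growth contradiction go through. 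A secondary subtlety is stating in which sense $\delta_\varphi\to 0$ (pointwise/almost surely versus in expectation), which is resolved by the finite-moment condition on $\delta$.
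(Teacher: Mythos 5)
Your proof is correct and follows the same overall structure as the paper's: the paper likewise decomposes the proxy error as $\delta_\varphi = f'(\varphi)\,\delta$ (its Lemma~\ref{lem:proxy_gradient_error}, Eq.~\eqref{equ:proxy_noise}) and then invokes $\lim_{\varphi\to\infty} f'(\varphi)=0$ (its Lemma~\ref{lem:dfdx_limit}), with $\theta\to\partial S$ translated to $\varphi\to\infty$. The one place you genuinely diverge is in how that limit is established. The paper proves Lemma~\ref{lem:dfdx_limit} by applying L'H\^opital's rule to $\exp(\varphi)f(\varphi)/\exp(\varphi)$, obtaining $\lim f = \lim(f+f')$ and hence $\lim f'=0$; you instead argue by contradiction that a positive limiting derivative would force at least linear growth of $f$, contradicting the finiteness of $\lim_{\varphi\to\infty}f(\varphi)=\partial S$. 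Your route is the more elementary of the two and arguably cleaner: the L'H\^opital step requires checking the hypotheses of the rule (an appropriate indeterminate form and existence of the limit of the ratio of derivatives), whereas your integration/mean-value bound uses nothing beyond monotonicity, the finite boundary limit, and the assumed existence of $\lim f'$. Both arguments hinge on exactly that existence assumption --- as you correctly flag, a bounded increasing $f$ can otherwise have a non-vanishing, oscillating derivative --- and both treat $\delta$ as an almost-surely finite multiplier, so the conclusions coincide; your closing remark on the mode of convergence (almost sure, via the finite-moment condition) is a point the paper leaves implicit.
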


\subsection{Stationary distribution}
We consider the following SDE of proxy variable $\varphi$ as the continuous counterpart of Eq.\,\eqref{equ:propose_update}
\begin{equation} \label{equ:proxy_process}
    d\varphi(t) = - \widehat{U}'(\varphi(t)) dt + \sqrt{2}dW(t),
\end{equation}
so as to apply the tools of stochastic analysis.
We confirm the existence and uniqueness of the weak solution and obtain its equilibrium.

Unlike the unconstrained case, a constrained target distribution $\pi_\theta(\theta)$ often has nonzero density at a domain boundary.
The following lemma is required so that the unnormalized proxy distribution $\int_{\varphi} \exp(-U(\varphi))d\varphi$ does not diverge.
\begin{lemma}[proxy potential] \label{lem:proxy_potential}
    Let $f$ satisfy Assumption~\ref{asm:transform_f}, and a target pdf $\pi_\theta(\theta)$ have a finite limit as $\theta$ goes to the boundary.
    Then for any $U(\varphi)$, we have:
    \begin{equation}
        \lim_{\varphi\to \infty} U(\varphi) = \infty.
    \end{equation}
\end{lemma}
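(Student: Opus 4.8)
The plan is to make the divergence explicit by expanding the proxy potential through the Jacobian relation and then locating which term forces the limit. Combining $U(\varphi)\propto-\log\pi(\varphi)$ with Eq.~\eqref{equ:transform_dist} and $\theta=f(\varphi)$, and using $f'>0$ (so $|f'|=f'$), I obtain up to an irrelevant additive constant
\begin{equation}
    U(\varphi) = -\log\pi_\theta\bigl(f(\varphi)\bigr) - \log f'(\varphi).
\end{equation}
Since $f(\varphi)\to\partial S$ and $\pi_\theta$ is assumed to have a finite boundary limit, the first term stays controlled as $\varphi\to\infty$; the entire behaviour therefore rests on the second term $-\log f'(\varphi)$, so it suffices to show that $f'(\varphi)\to 0$.

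The key step, which I expect to be the main obstacle, is precisely this claim that $f'(\varphi)\to 0$ at the boundary. Here I would use the full strength of Assumption~\ref{asm:transform_f}: $f$ is monotone increasing with $0\le f'\le L$, it sends the proxy infinity to the finite boundary $\partial S=\lim_{\varphi\to\infty}f(\varphi)$, and $\ell:=\lim_{\varphi\to\infty}f'(\varphi)$ exists. Suppose toward a contradiction that $\ell>0$. Then for all sufficiently large $\varphi$ we have $f'(\varphi)\ge\ell/2$, so integrating from some $\varphi_0$,
\begin{equation}
    f(\varphi) = f(\varphi_0) + \int_{\varphi_0}^{\varphi} f'(s)\,ds \ge f(\varphi_0) + \tfrac{\ell}{2}(\varphi-\varphi_0) \xrightarrow{\varphi\to\infty} \infty,
\end{equation}
contradicting the finiteness of $\partial S$. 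The existence of the limit $\ell$ is exactly what upgrades this from a subsequential statement to the full limit, so that hypothesis in Assumption~\ref{asm:transform_f} is used essentially here. Hence $\ell=0$ and $-\log f'(\varphi)\to+\infty$.

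Putting the two terms together finishes the argument: with $f'(\varphi)\to 0^+$ the second term diverges to $+\infty$, while $-\log\pi_\theta(f(\varphi))\to-\log\pi_\theta(\partial S)$ is finite whenever the boundary density is strictly positive, and is itself $+\infty$ in the degenerate case $\pi_\theta(\partial S)=0$; either way $U(\varphi)\to\infty$. The only variant worth flagging is a semi-infinite target where $\partial S=\infty$ (so $\ell>0$ is permitted): there the same integration shows $f(\varphi)\to\infty$, and integrability of the target pdf forces $\pi_\theta(f(\varphi))\to 0$, so the first term carries the divergence instead. In every case the product $\pi(\varphi)=\pi_\theta(f(\varphi))\,f'(\varphi)\to 0$ because one factor vanishes while the other stays bounded, which is precisely the statement $U(\varphi)\to\infty$.
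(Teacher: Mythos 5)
Your proof is correct and follows the same overall route as the paper: the identical decomposition $U(\varphi) = -\log\pi_\theta(f(\varphi)) - \log f'(\varphi)$, boundedness of the density term from the finite-limit hypothesis, and the key fact $f'(\varphi)\to 0$. Where you genuinely differ is in how that key fact is established. The paper isolates it as Lemma~\ref{lem:dfdx_limit} and proves it by a L'H\^{o}pital computation (writing $f(\varphi) = e^\varphi f(\varphi)/e^\varphi$ and differentiating numerator and denominator), which yields $\lim f = \lim(f + f')$ and hence $\lim f' = 0$; you instead integrate: if $\lim f' = \ell > 0$, then $f(\varphi) \ge f(\varphi_0) + \tfrac{\ell}{2}(\varphi-\varphi_0) \to \infty$, contradicting finiteness of $\partial S$. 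The two arguments rest on the same hypotheses (existence of $\lim f'$ and finiteness of the boundary), but yours makes their roles explicit, whereas the paper's L'H\^{o}pital step silently assumes $\lim_{\varphi\to\infty} f(\varphi)$ is finite --- it degenerates to $\infty = \infty$ when $\partial S = \infty$, which is why Lemma~\ref{lem:dfdx_limit} as literally stated fails for softplus (where $f' \to 1$). Your closing remark handles precisely this semi-infinite case, correctly shifting the divergence of $U$ onto the density factor; the only caveat is that integrability of $\pi_\theta$ alone does not force $\pi_\theta(f(\varphi))\to 0$ (a pdf can have non-vanishing spikes), so you should invoke the lemma's hypothesis that the boundary limit exists, which together with integrability pins that limit at zero. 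In short, your approach buys a more elementary and more transparent argument for the crux, and covers a boundary case the paper's own lemma glosses over, at the cost of a slightly longer case analysis.
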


Lemma~\ref{lem:proxy_potential} is enough for some cases (e.g. truncated normal).
However, in order to show the same proposition for distributions that has infinite density at a boundary (e.g. beta and gamma), we need the following additional assumption.
\begin{assumption}
    For $\pi_\theta(\theta)$ of interest, $f$ satisfies
    \begin{equation}
        \lim_{\varphi\to \infty} \pi_\theta(f(\varphi)) |f'(\varphi)| = 0.
    \end{equation}
\end{assumption}

Under the existence and uniqueness of solution (see Appendix\,\ref{app:proxyprocess_exist_unique}), we derive the stationary distribution of Eq.\,\eqref{equ:proxy_process} as follows.
\begin{theorem}[stationary distribution] \label{thm:proxy_stationary}
    Let transform function $f$ satisfy Assumption~\ref{asm:transform_f}.
    For transition probability density functions $p(\varphi,t)$ and $p(\theta,t)$ of the variables at time $t$, we have:
    \begin{equation}
        \lim_{t\to\infty} p(\varphi,t) = \pi(f(\varphi)) \left|f'(\varphi)\right|
        \quad \mathit{and} \quad
        \lim_{t\to\infty} p(\theta,t) = \pi_\theta(\theta).
    \end{equation}
\end{theorem}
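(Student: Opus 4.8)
The plan is to treat Eq.\,\eqref{equ:proxy_process} as a one-dimensional It\^o diffusion and read off its equilibrium from the associated Fokker--Planck equation, then push the resulting proxy law through the change of variables $\theta=f(\varphi)$. First I would take the expectation over the sampling set $S$ so that, by Assumption~\ref{asm:delta_SGLD}, the mean drift coincides with $-U'(\varphi)$; combined with Eq.\,\eqref{equ:convert_potential} this is exactly the gradient of the proxy potential $U$ whose Gibbs law is $\pi(\varphi)$. The forward Kolmogorov equation for the diffusion then reads
\[
\partial_t\, p(\varphi,t) = \partial_\varphi\bigl[U'(\varphi)\,p(\varphi,t)\bigr] + \partial_\varphi^2\, p(\varphi,t),
\]
and I would rewrite its right-hand side as $-\partial_\varphi J$ with probability current $J = -U'(\varphi)p - \partial_\varphi p$.

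To identify the equilibrium I would set $\partial_t p = 0$, so that $J$ is constant in $\varphi$. Because the state space is all of $\mathbb{R}$ and any admissible density must vanish at $\pm\infty$, the constant current is forced to be zero, giving $\partial_\varphi p_\infty = -U'(\varphi)\,p_\infty$, hence $p_\infty(\varphi)\propto \exp(-U(\varphi))$. Using $U(\varphi)=U_\theta(f(\varphi))-\log|f'(\varphi)|$, obtained by integrating Eq.\,\eqref{equ:convert_potential}, this equals $\pi_\theta(f(\varphi))\,|f'(\varphi)|$, which by Eq.\,\eqref{equ:transform_dist} is precisely the proxy density $\pi(\varphi)$, establishing the first claim. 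For the second claim I would apply the standard one-dimensional change-of-variables formula: since $f$ is monotonic and differentiable, the law of $\theta=f(\varphi)$ has density $p(\theta,t)=p(\varphi,t)/|f'(\varphi)|$ at every time $t$. Passing to the limit and substituting $p_\infty(\varphi)=\pi_\theta(f(\varphi))\,|f'(\varphi)|$ cancels the Jacobian and yields $\lim_{t\to\infty}p(\theta,t)=\pi_\theta(\theta)$.

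The main obstacle I anticipate is not the formal computation of the stationary form but the justification that $p(\varphi,t)$ actually converges to it. This requires, first, normalizability of $\exp(-U(\varphi))$ on $\mathbb{R}$, which is exactly what Lemma~\ref{lem:proxy_potential} (together with the additional assumption $\lim_{\varphi\to\infty}\pi_\theta(f(\varphi))|f'(\varphi)|=0$ for densities with an infinite boundary value, such as beta and gamma) guarantees by forcing $U(\varphi)\to\infty$ as $\varphi\to\pm\infty$; and second, ergodicity of the diffusion, which follows from existence and uniqueness of the weak solution (Appendix~\ref{app:proxyprocess_exist_unique}) together with the confining property of $U$---for a one-dimensional diffusion a confining potential makes the process positive recurrent with the Gibbs measure as its unique invariant law. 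A secondary technical point is controlling the stochastic-gradient perturbation: its zero mean (Assumption~\ref{asm:delta_SGLD}) justifies replacing $\widehat{U}'$ by $U'$ in the drift, while its finite variance ensures the extra fluctuation does not alter the leading-order equilibrium.
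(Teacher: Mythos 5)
Your proposal is correct and follows essentially the same route as the paper's proof: both pass to the Fokker--Planck equation of the proxy SDE\,\eqref{equ:proxy_process}, identify the Gibbs law $\propto \exp(-U(\varphi)) = \pi_\theta(f(\varphi))\left|f'(\varphi)\right|$ as the stationary density, and then cancel the Jacobian via Eq.\,\eqref{equ:transform_dist} to recover $\pi_\theta(\theta)$. The only difference is one of detail: where you derive the equilibrium explicitly by the zero-current argument and justify convergence via the confining potential, the paper simply invokes its existence/uniqueness lemmas together with Lemma~\ref{lem:proxy_gradient_error} and the corresponding SGLD result of \citet{Sato:2014xy}.
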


\subsection{Weak convergence}
We also check Eq.\,\eqref{equ:propose_update} does not break the unique weak solution of Eq.\,\eqref{equ:proxy_process} by confirming that the discretization error is bounded.
From Lemmas~\ref{lem:proxy_potential} and \ref{lem:proxy_gradient_error}, the weak convergence is derived.
\begin{theorem}[weak convergence] \label{thm:proxy_weak}
    Let transform function $f$ satisfy Assumption~\ref{asm:transform_f}.
    For any test functions $h$ and $h_\theta$ those are continuous differentiable and polynomial growth, we have:
    \begin{equation}\label{equ:thm_weak_exp}
        \left|\mathbb{E}[h(\widetilde{\varphi}(T))] - \mathbb{E}[h(\varphi(T))]\right| = \mathcal{O}(\epsilon_0)
        \quad \mathit{and} \quad
        \left|\mathbb{E}[h_\theta(\widetilde{\theta}(T))] - \mathbb{E}[h_\theta(\theta(T))]\right| = \mathcal{O}(\epsilon_0),
    \end{equation}
    where $\varphi(T)$ and $\theta(T)$ denote the random variables at fixed time $T$, $\widetilde{\varphi}(T)$ and $\widetilde{\theta}(T)$ denote discretized samples at fixed time $T$ by CoRV SGLD, and $\epsilon_0>0$ is the initial stepsize.
\end{theorem}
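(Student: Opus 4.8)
The plan is to prove the proxy-space estimate in \eqref{equ:thm_weak_exp} first and then derive the target-space estimate as a special case. For the reduction, note that $\theta = f(\varphi)$ with $f$ Lipschitz and $C^1$ by Assumption~\ref{asm:transform_f}, so whenever $h_\theta$ is continuously differentiable with polynomial growth, $h := h_\theta \circ f$ is as well: the bound $|f(\varphi)| \le |f(0)| + L|\varphi|$ controls the growth of $h_\theta(f(\varphi))$, and $(h_\theta\circ f)'(\varphi) = h_\theta'(f(\varphi))\,f'(\varphi)$ is a product of a polynomial-growth factor and the bounded factor $f'\le L$. Since $\widetilde\theta(T) = f(\widetilde\varphi(T))$ and $\theta(T) = f(\varphi(T))$ hold exactly, the second estimate is just the first applied to this particular $h$. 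Hence it suffices to prove $\left|\mathbb{E}[h(\widetilde\varphi(T))] - \mathbb{E}[h(\varphi(T))]\right| = \mathcal{O}(\epsilon_0)$.

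For the proxy process I would use the infinitesimal-generator (backward Kolmogorov) method. Writing the gradient error as $\delta_\varphi := \widehat{U}'(\varphi) - U'(\varphi) = f'(\varphi)\delta$, which has zero conditional mean by Assumption~\ref{asm:delta_SGLD}, the mean dynamics of \eqref{equ:proxy_process} have generator $\mathcal{L}\psi = -U'(\varphi)\psi'(\varphi) + \psi''(\varphi)$. Define $u(t,\varphi) = \mathbb{E}[h(\varphi(T)) \mid \varphi(t)=\varphi]$, which solves $\partial_t u + \mathcal{L}u = 0$ with terminal data $u(T,\cdot)=h$. Starting the Euler chain of \eqref{equ:propose_update} and the diffusion from the same point and telescoping over $0 = t_0 < \cdots < t_N = T$,
\begin{equation}
\mathbb{E}[h(\widetilde\varphi(T))] - \mathbb{E}[h(\varphi(T))]
= \sum_{k=0}^{N-1} \mathbb{E}\!\left[ u(t_{k+1}, \widetilde\varphi_{k+1}) - u(t_k, \widetilde\varphi_k) \right],
\end{equation}
so the global error is a sum of one-step local errors.

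For each local term I would Taylor-expand $u(t_{k+1},\cdot)$ about $\widetilde\varphi_k$ along the increment $\widetilde\varphi_{k+1} - \widetilde\varphi_k = -\epsilon_k \widehat{U}'(\widetilde\varphi_k) + \sqrt{2\epsilon_k}\,\eta_k$ and take conditional expectations. The terms reconstructing $\mathcal{L}u$ cancel against $\partial_t u$ because $u$ solves the PDE, and the condition $\mathbb{E}_S[\delta_\varphi\mid\widetilde\varphi_k]=0$ kills the first-order gradient-noise term. What remains is $\mathcal{O}(\epsilon_k^2)$: the usual Euler--Maruyama truncation involving third derivatives of $u$ and products such as $U'(U')'$, together with the gradient-noise variance contribution $\tfrac12\epsilon_k^2\,\mathbb{E}_S[\delta_\varphi^2]\,\partial_\varphi^2 u$, which is finite precisely because $\delta_\varphi = f'\delta$ with $f'\le L$ gives $\mathbb{E}_S[|\delta_\varphi|^l]\le L^l\,\mathbb{E}_S[|\delta|^l]<\infty$ (Lemma~\ref{lem:proxy_gradient_error}). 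Summing the local errors and using $\sum_k \epsilon_k^2 \le \epsilon_0 \sum_k \epsilon_k = \epsilon_0 T$ yields the claimed $\mathcal{O}(\epsilon_0)$.

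The main obstacle is not this bookkeeping but the uniform regularity and moment control that makes the local $\mathcal{O}(\epsilon^2)$ estimates legitimate: one must show that $u(t,\cdot)$ and its derivatives up to third order retain polynomial growth uniformly in $t\le T$, and that the discrete chain satisfies $\sup_k \mathbb{E}[|\widetilde\varphi_k|^p]<\infty$ so that the polynomial-growth remainders are integrable. This is where Lemma~\ref{lem:proxy_potential} enters: the coercivity $U(\varphi)\to\infty$ supplies a Lyapunov function for \eqref{equ:proxy_process}, giving the drift a confining sign at large $|\varphi|$ and hence uniform moment bounds for both the continuous process and the Euler chain, while the boundedness of $f'$ and of $f''/f'$ (the same quantity underlying the stability in Theorem~\ref{thm:CoRV_stable}) keeps the proxy drift in \eqref{equ:convert_potential} from growing too fast. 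Verifying these Lyapunov and growth estimates, and the attendant smoothness of $u$, is the delicate part; the generator cancellation and telescoping are then routine.
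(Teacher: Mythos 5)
Your proposal is correct and rests on the same mathematical mechanism as the paper's proof, but it is organized differently and is considerably more self-contained. The paper does not carry out the backward-Kolmogorov/telescoping argument itself: it interpolates the CoRV chain as an SDE with perturbed drift $\widetilde a = a + \delta_{\varphi,t}$, invokes Theorem 6 of \citet{Sato:2014xy} to obtain an integral representation of the weak error in terms of $\left(\widetilde a - a\right)\partial_\varphi \mathbb{E}[h(\widetilde\varphi(t))]$ and $\left(\widetilde b^2 - b^2\right)\partial^2_\varphi \mathbb{E}[h(\widetilde\varphi(t))]$, bounds each integrand by $C_k\epsilon_{t_{k-1}}$ via an appeal to the Weierstrass (extreme-value) theorem, and sums to get $TC_{\mathrm{max}}\epsilon_0$. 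Your telescoping sum over $u(t,\varphi)=\mathbb{E}[h(\varphi(T))\mid\varphi(t)=\varphi]$, the PDE cancellation, and the killing of the first-order noise term by $\mathbb{E}_S[\delta_\varphi]=0$ are precisely the ingredients hidden inside that cited theorem, so in effect you re-derive it (Talay--Tubaro style) rather than cite it. Your handling of the second claim --- composing $h = h_\theta\circ f$ and using Lipschitzness from Assumption~\ref{asm:transform_f} to keep $h$ in the admissible test class --- is exactly the paper's closing step, except that you do it up front and, usefully, you actually verify that differentiability and polynomial growth are preserved under the composition, which the paper asserts without comment. The trade-off: the paper's proof is shorter by outsourcing the error representation, but it glosses the regularity question with a one-line Weierstrass appeal; your version isolates the genuine technical burden (uniform-in-$t$ polynomial-growth control of $u$ and its derivatives up to third order, plus moment bounds $\sup_k\mathbb{E}[|\widetilde\varphi_k|^p]<\infty$ for the chain, for which your Lyapunov use of Lemma~\ref{lem:proxy_potential} is the right tool), though, like the paper, you stop short of executing those estimates in full.
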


\paragraph{Empirical result.}
We empirically confirm Theorem\,\ref{thm:proxy_weak} using basic distributions.
The expectation of continuous process $h_\theta (\theta(T))$ is substituted with its true expectation and the identity function $h_\theta(\theta) = \theta$ was selected. 
Specifically, we set $\mathbb{E}[h_\theta(\theta(T))] = 0.25$ for the gamma distribution with its shape and scale being $0.5$.
Figure\,\ref{fig:weakerror_compare} shows the numerical errors corresponding to Eq.\,\eqref{equ:thm_weak_exp} for three sampling methods. 
We can see that the error of \texttt{CoRV} almost linearly scales with stepsize $\epsilon_0$, as suggested by Theorem\,\ref{thm:proxy_weak}. 
The errors of \texttt{mirror} and \texttt{Ito} are significantly greater than \texttt{CoRV}.
The smaller stepsizes do not improve \texttt{Ito}, implying the difficulty for practical application.

\begin{figure*}[t]\begin{center}
  \includegraphics[width=0.32\linewidth]{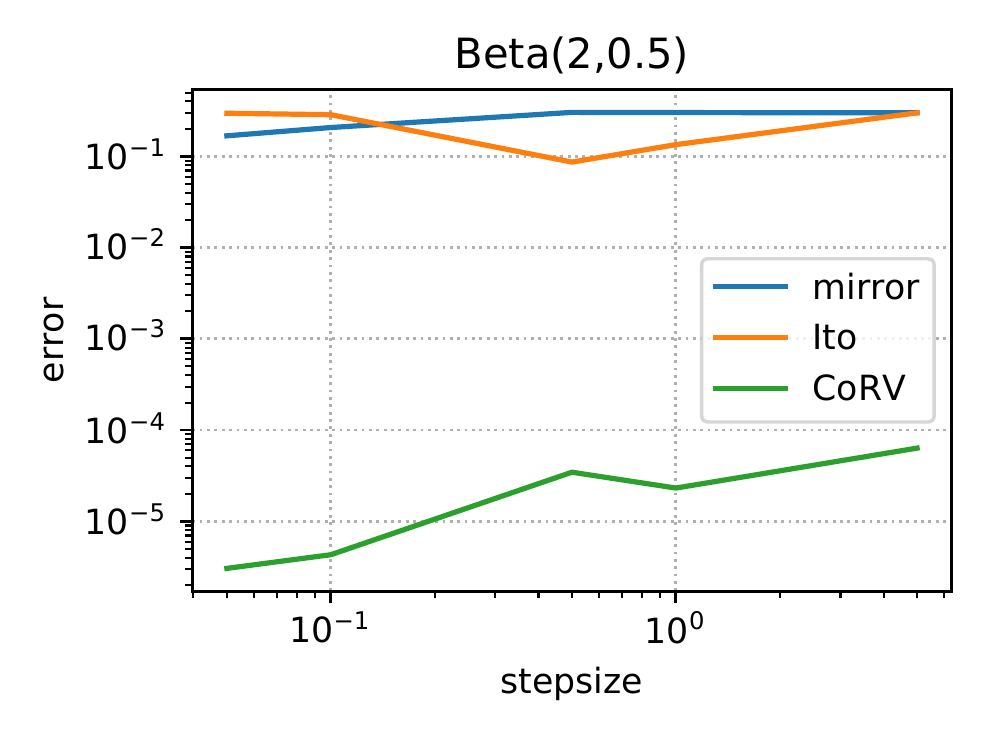}
  \includegraphics[width=0.32\linewidth]{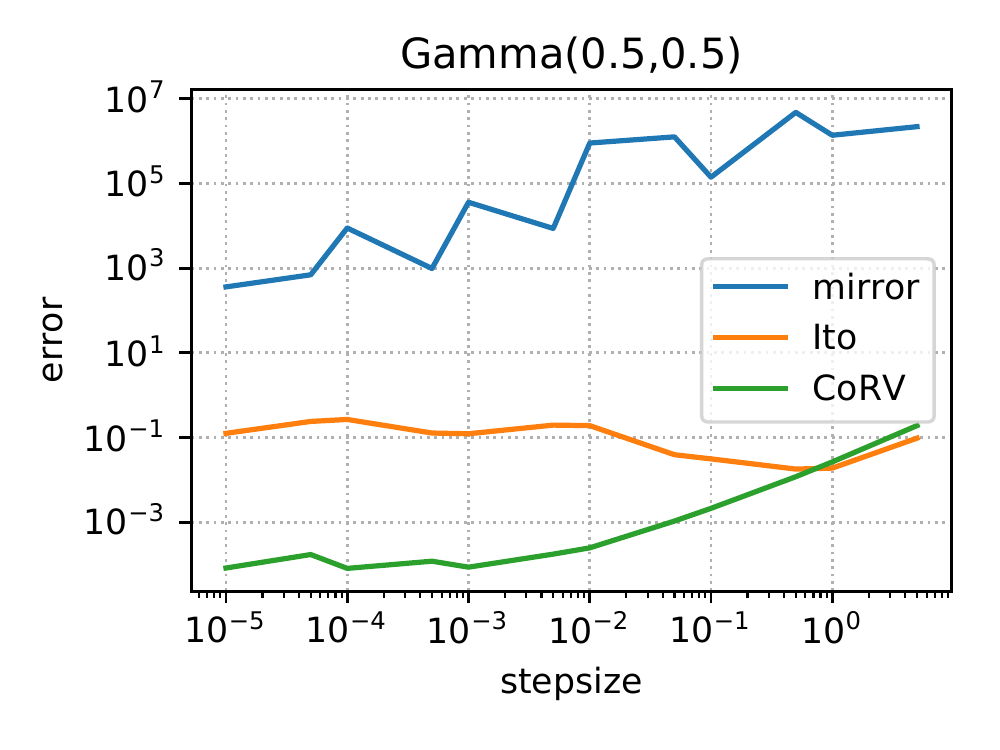}
  \includegraphics[width=0.32\linewidth]{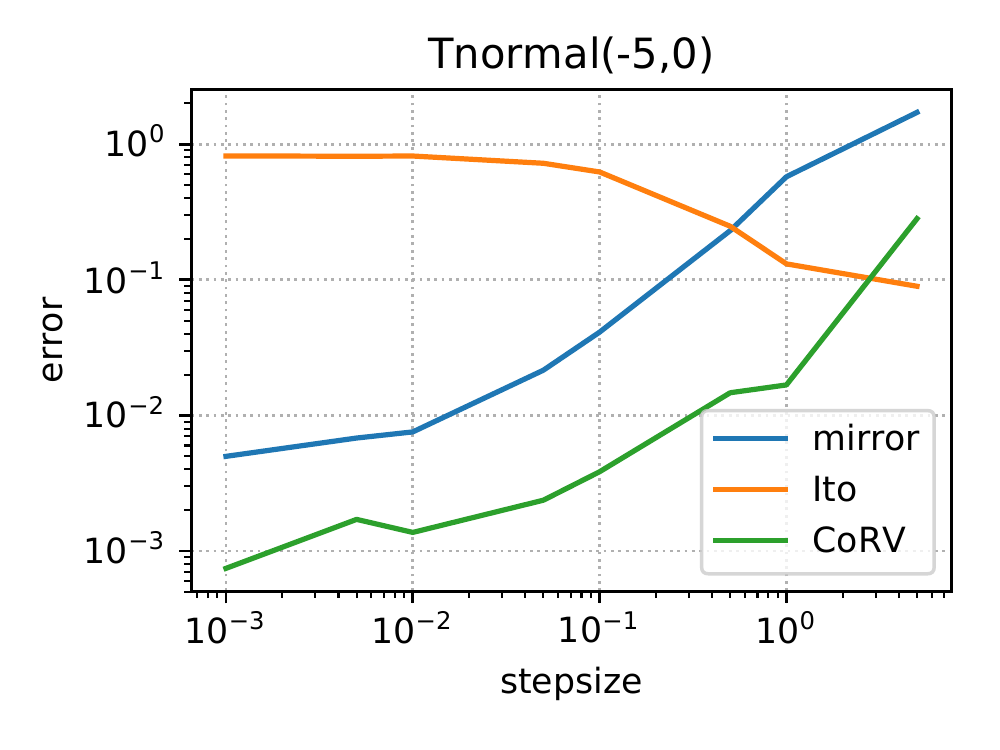}
  \caption{
    Expectation error in beta, gamma, and truncated standard normal distributions.
  }
  \label{fig:weakerror_compare}
\end{center}\end{figure*}

\section{Experiments} \label{sec:experiment}
In this section, we show the usefulness of our method using a range of models for many application scenarios.
Results demonstrate a practical efficacy of the CoRV approach on top of the theoretical justifications that have been discussed.
We used the P100 GPU accelerator for all experiments.

\begin{wraptable}{R}{0.45\textwidth}
    \caption{Transform functions.} 
  \label{tab:transform}
  \begin{center}
  \begin{tabular}{lc}
  \toprule
\multicolumn{1}{c}{Name}  &\multicolumn{1}{c}{Definition}\\
  \midrule
  sigmoid & $1/(1+\exp(-\varphi))\in (0,1)$\\
  arctan & $\tan^{-1}(\varphi)/\pi + 1/2\in (0,1)$\\
  softsign & $\varphi/2(1+|\varphi|) + 1/2\in (0,1)$\\
  \textcolor{red}{exp\footnotemark} & \textcolor{red}{$\exp(\varphi)\in \mathbb{R}_+$}\\
  softplus &  $\log(1+\exp(\varphi))\in \mathbb{R}_+$\\
  ICLL & $\varphi - \textrm{Ei}(-\exp(\varphi)) + \gamma\in \mathbb{R}_+$\\
  \bottomrule
  \end{tabular}
  \end{center}
\end{wraptable}

\subsection{Bayesian NMF}
For a typical application that uses a probability distribution supported on a finite or semi-infinite interval, we considered Bayesian non-negative matrix factorization~\citep{Cemgil:2009:BIN:1592511.1592515}.
We evaluated each sampling methods through the Bayesian prediction accuracy.
In the experiments, we employed the MovieLens dataset, a commonly used benchmark for matrix factorization tasks~\citep{Ahn:2015:LDB:2783258.2783373}.
It was split into $75:12.5:12.5$ for training, validation, and testing.
We compared (1) our CoRV, (2) the state-of-the-art SGLD-based method~\citep{Ahn:2015:LDB:2783258.2783373} modified for non-negative values, and (3) SGRLD~\citep{Patterson:2013kq} using natural gradient with diagonal preconditioning.
Methods (2) and (3) used the mirroring trick.
We also compared three transform functions for constraining to non-negative variables: exp, softplus, and ICLL in Table~\ref{tab:transform}.
The It\^o formulation was omitted due to a significant numerical instability.
The test root mean square error (RMSE) was used as the performance metric.
The prediction was given by the Bayesian predictive mean computed by a moving average.
We set the number of dimensions of latent variables $R$ to $20$ and $50$.
We trained for $10,000$ iterations with $R=20$ and for $20,000$ with $R=50$.
The stepsize was chosen by TPE of $100$ trials to minimize the validation loss.
Algorithm derivation and configuration are detailed in Appendix\,\ref{app:algo}.

\footnotetext{Note that the exponential function does not satisfy Assumption\,\ref{asm:transform_f}.}

\begin{figure*}[t]\begin{center}
    \includegraphics[width=0.49\linewidth]{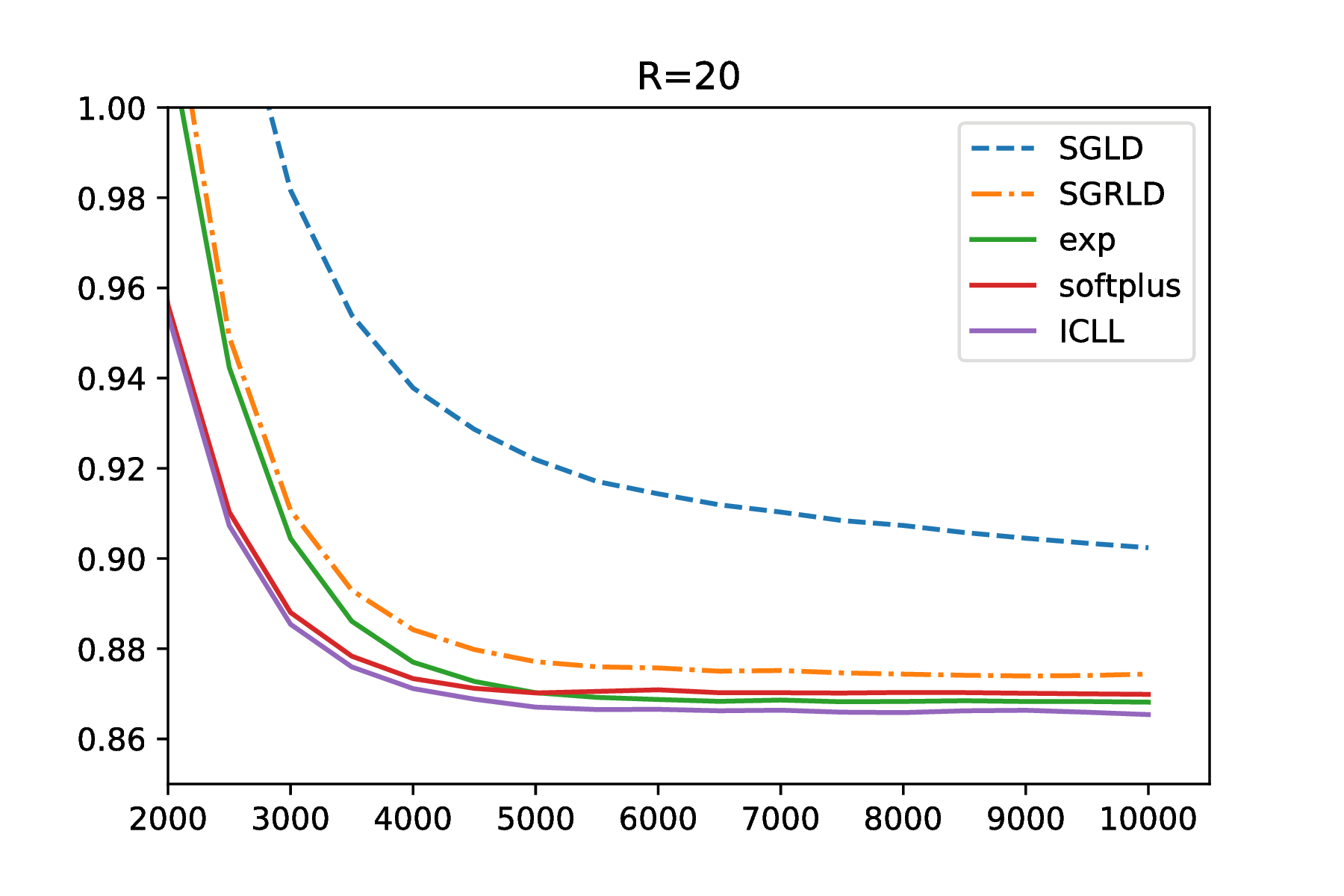}
    \includegraphics[width=0.49\linewidth]{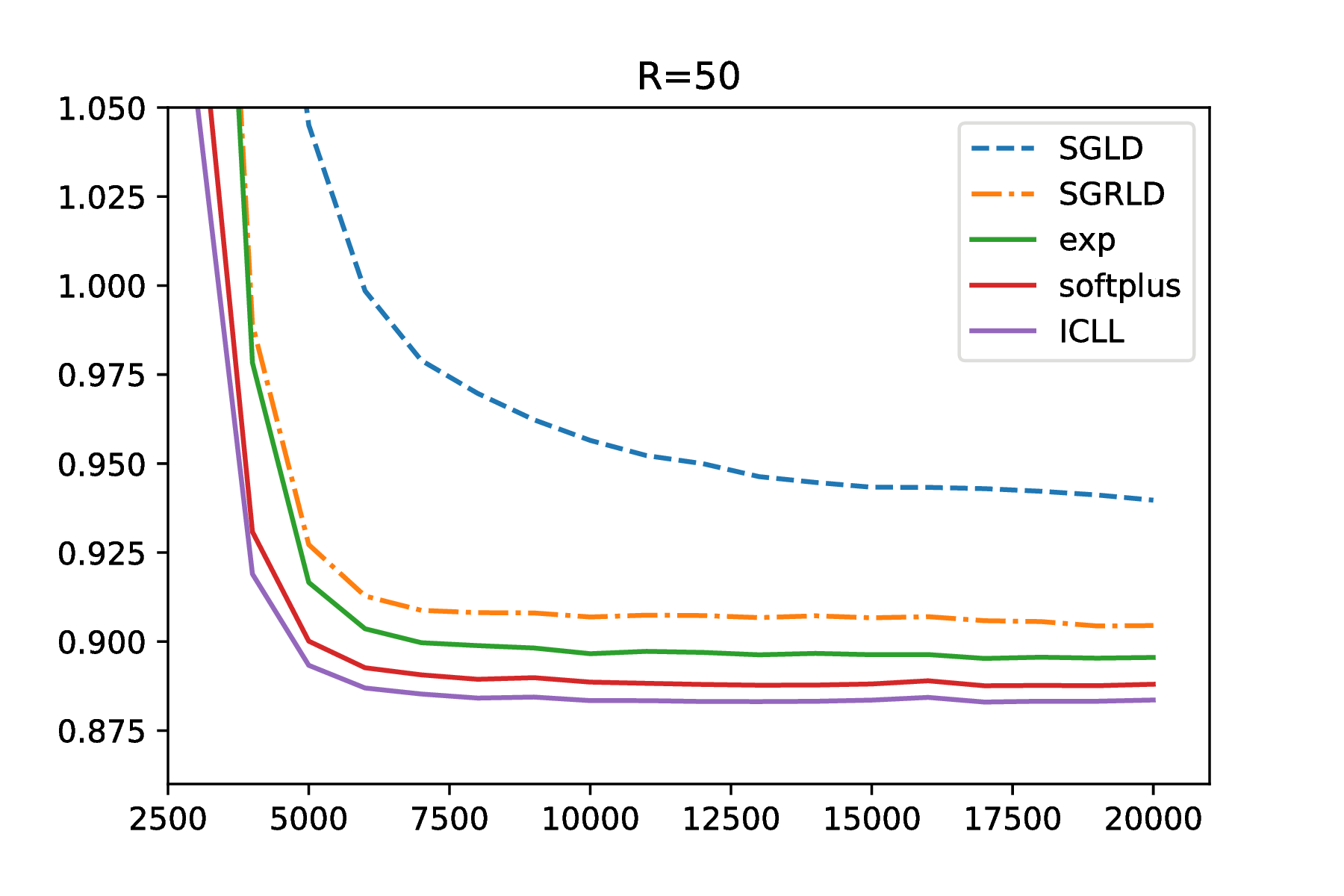}
    \caption{
        Test RMSE of Bayesian non-negative matrix factorization on the MovieLens dataset.
        Vertical and horizontal axes indicate RMSE and iteration number, respectively.
        Broken lines (SGLD and SGRLD) used the mirroring trick.
        Solid lines indicate CoRV method with respective transform functions.
        CoRV methods consistently outperformed existing methods with quick decrease in the error, while the choice of transform function slightly influenced the performance.
    }
    \label{fig:NMF_RMSE}
\end{center}\end{figure*}

\paragraph{Result.}
Figure~\ref{fig:NMF_RMSE} shows the curves of root mean square error (RMSE) values as a function of iterations.
SGLD and SGRLD are existing methods with the mirroring trick whereas exp, softplus, and ICLL indicates our method with the specified transform function.
We observed that CoRV SGLD made better predictions with smaller iterations than the other two algorithms.
When $R=20$ (Figure\,\ref{fig:NMF_RMSE} left), SGLD took $10,000$ iterations to reach an RMSE of $0.90$, whereas CoRV SGLD (softplus and ICLL) achieves it with only $3,000$ iterations.
While the choice of transform functions may influence the performance, CoRV outperformed the best performing baseline SGRLD.
Our method has a computational overhead regarding the transform function, as discussed in Appendix~\ref{app:compute}.
In this experiment, we found that at most $10\%$ computation time was necessary to run our method.

\begin{figure*}[t]\vspace{8mm}\begin{center}
    \includegraphics[width=0.49\linewidth]{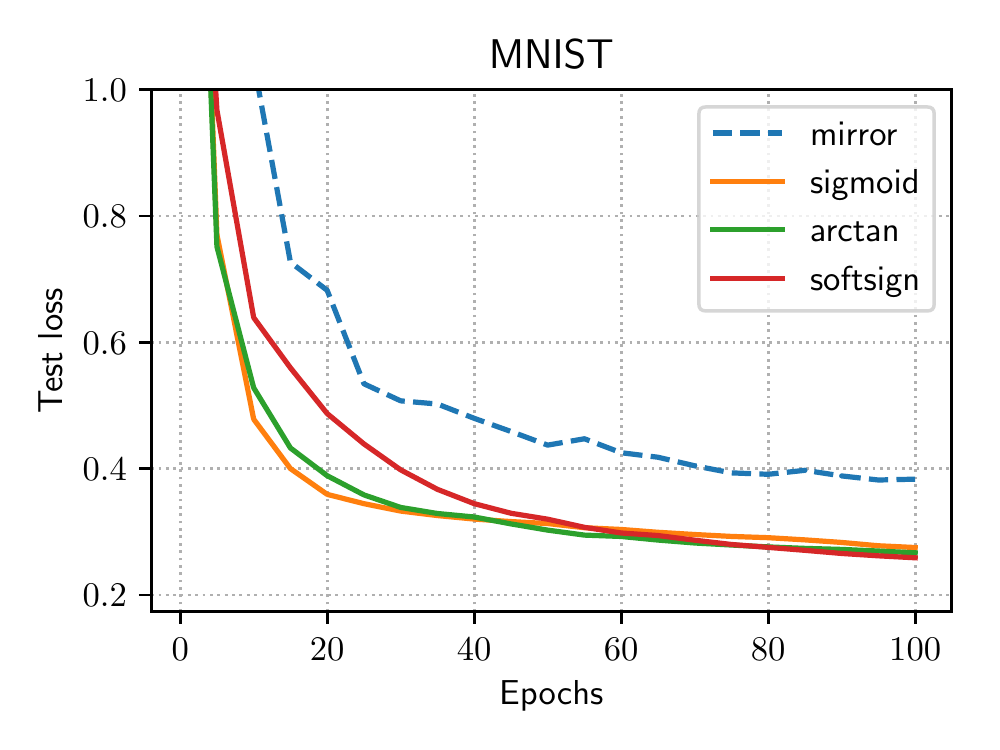}
    \includegraphics[width=0.49\linewidth]{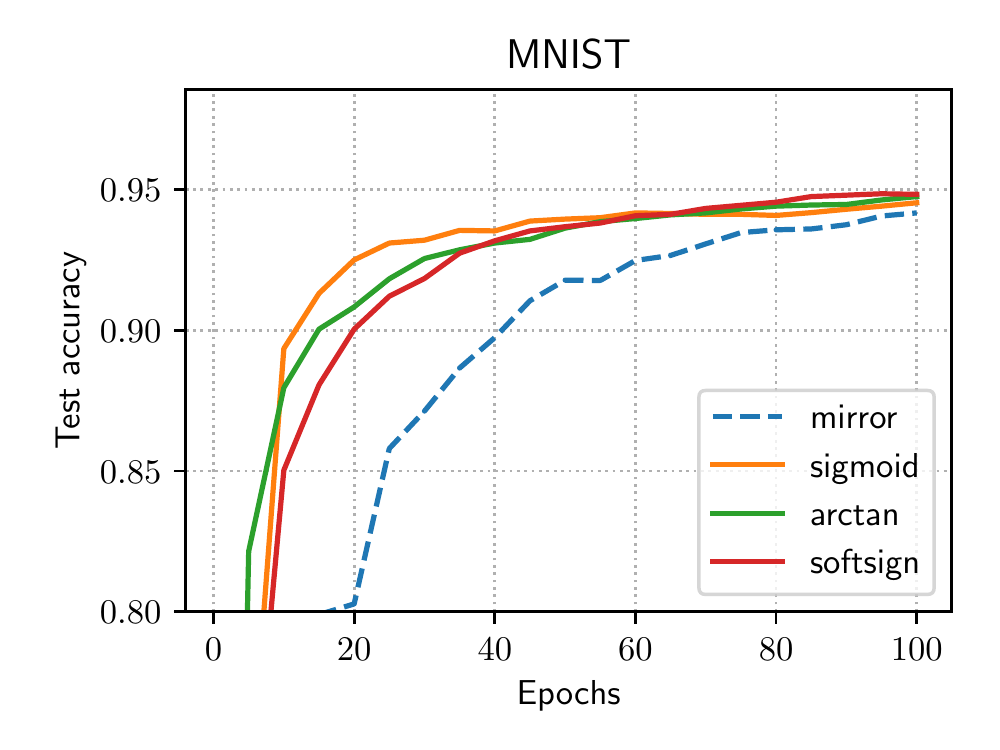}\hfill
    \includegraphics[width=0.49\linewidth]{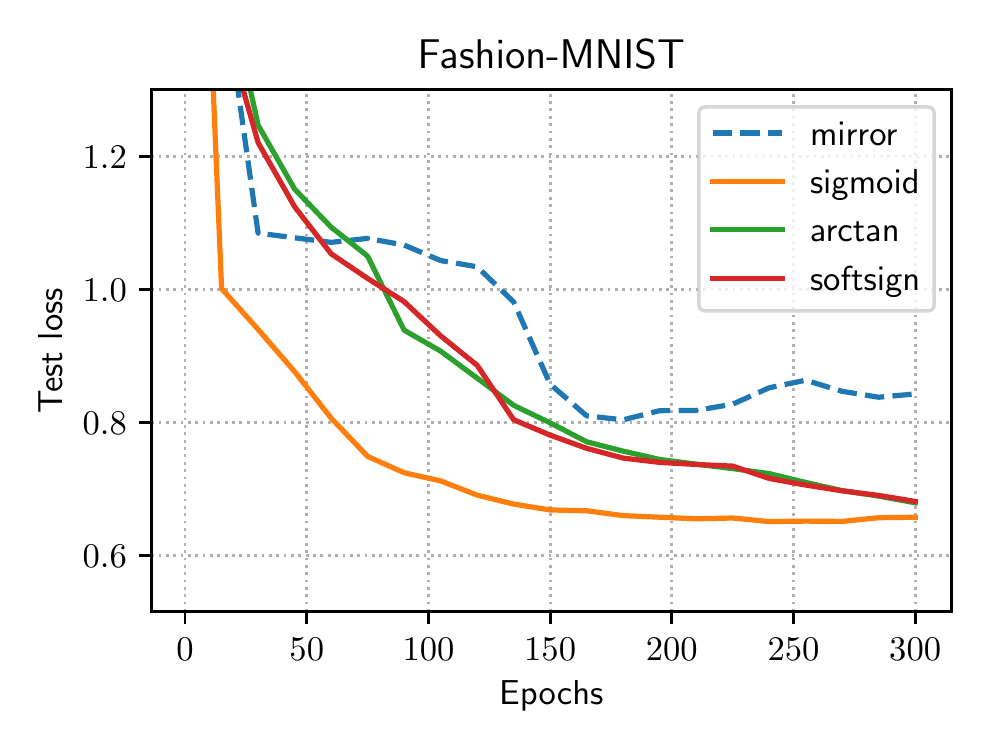}
    \includegraphics[width=0.49\linewidth]{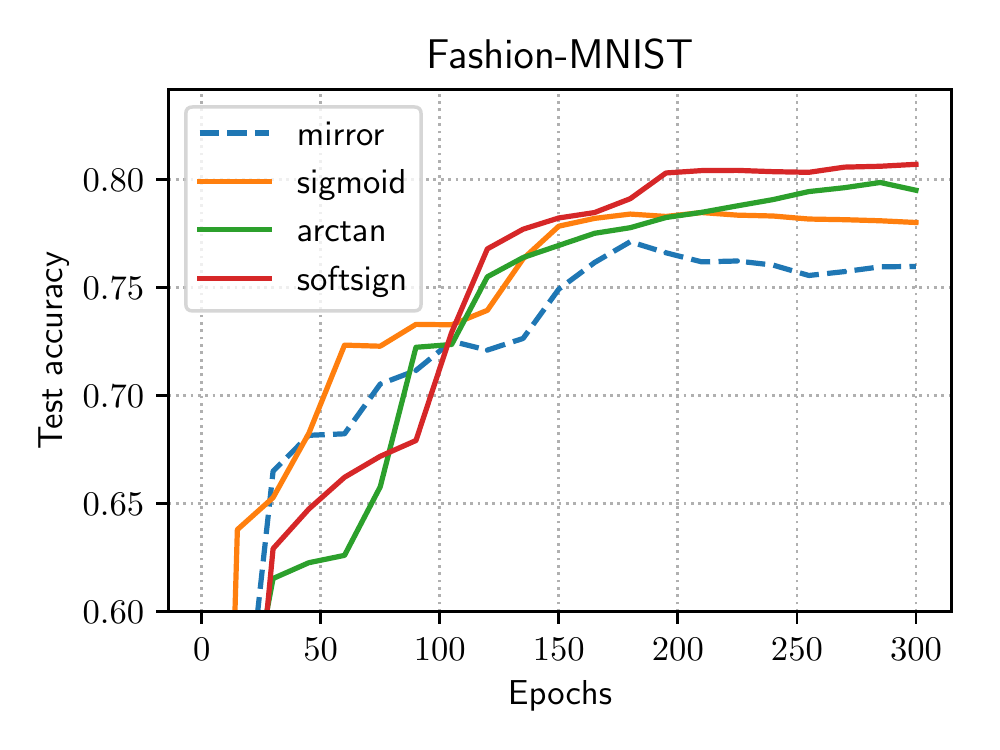}
    \caption{
        Test loss and accuracy of Bayesian binary neural networks on the MNIST and Fashion-MNIST dataset.
        The mirroring trick (\texttt{mirror}) showed a slower learning curve for both datasets.
        The CoRV formulation worked appropriately with the sigmoid, arctan, and softsign transforms.
    }
    \label{fig:BNN_result}
\end{center}\end{figure*}

\subsection{Bayesian binary neural network}
A binary neural network, whose parameters are restricted to binary, is expected to achieve high performance on small devices in terms of memory efficiency\,\citep{Courbariaux:2015aa}\,\citep{NIPS2016_6573}.
We evaluated each sampling methods through the Bayesian prediction accuracy of binary neural network model.
We considered a Bayesian binary three-layer feed forward network containing $50$ hidden units with the ReLU activation. 
In the experiments, we employed MNIST\,\citep{MNIST} and Fashion-MNIST\,\citep{fashion-MNIST} dataset.
Both datasets were split into $8:1:1$ for training, validation, and testing.
We compared
(1) our CoRV using the sigmoid, arctangent, and softsign functions, and
(2) a standard SGLD with the mirroring trick.
The It\^o formulation was omitted due to a significant numerical instability.
The cross-entropy loss of the softmax classifier and classification accuracy were evaluated.
The accuracy was given by the Bayesian predictive mean computed by a moving average of binarized weights at each epoch.
We trained the networks for $100$ epochs with MNIST and for $300$ epochs with Fashion-MNIST.
The stepsize was chosen by TPE of $100$ trials to minimize the validation loss.
The other experimental settings are in Appendix\,\ref{app:BNN_detail}.

\paragraph{Results.}
Figure\,\ref{fig:BNN_result} presents the test loss and accuracy.
Note that the purpose of this experiment is to compare sampling methods on the same model rather than to propose a state-of-the-art network.
The learning curves show that CoRV achieves better prediction than mirroring heuristics.
It is effective in practice that transformation enables stable computation with a large stepsize.

\section{Conclusion} \label{sec:conclusion}
SGLD has resorted to some heuristics for sampling bounded random variables since SGLD is designed for unbounded ones.
We demonstrated such heuristics may sacrifice the sampling accuracy both empirically and theoretically.
To deal with such random variables, we generalized SGLD using the change-of-random-variable (CoRV) formulation and analyzed its weak convergence.
Empirical evaluations showed that our CoRV SGLD outperformed existing heuristic alternatives on Bayesian non-negative matrix factorization and neural networks.

\section*{Acknowledgement}
Iseei Sato was supported by KAKENHI 17H04693.

\bibliography{ref}
\bibliographystyle{plainnat} %

\clearpage
\appendix

\section{It\^o Formula} \label{app:ito_formula}
In the stochastic differential equation, we have the following formula.
\begin{theorem}[It\^o Formula \citep{ito:1944}]\label{theorem:ito_formula}
$X(t)$ satisfies the stochastic differential equation
\begin{align}
dX(t)=a(t,X(t))dt+b(t,X(t))dW(t).
\end{align}
Let $h(t,X(t))$ be a given bounded function in $C^2((0,\infty)\times \mathbb{R})$.
Then, $h(t,X(t))$ satisfies the stochastic differential equation
\begin{align}
d h(t,X(t))=\mathcal{L}_1 h(t,X(t)) dt+ \mathcal{L}_2 h(t,X(t))dW(t),
\end{align}
where $\mathcal{L}_1$ and $\mathcal{L}_2$ are linear operators defined by
\begin{align}
\mathcal{L}_1 &=\frac{\partial}{\partial t}+a\frac{\partial}{\partial X} +\frac{1}{2}b^2\frac{\partial^2}{\partial X^2},~~~\mathcal{L}_2 =b\frac{\partial}{\partial X}.
\end{align}
\end{theorem}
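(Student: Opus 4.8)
The plan is to derive the formula from a second-order Taylor expansion of $h$ made rigorous by the quadratic-variation property of Brownian motion, namely the heuristic $(dW)^2 = dt$. First I would fix a partition $0 = t_0 < t_1 < \cdots < t_n = t$ of a generic time interval, with mesh tending to zero, write $X_i := X(t_i)$, $\Delta t_i := t_{i+1}-t_i$, $\Delta X_i := X_{i+1}-X_i$, $\Delta W_i := W(t_{i+1})-W(t_i)$, and express the increment of the composed process telescopically as $h(t,X(t)) - h(0,X(0)) = \sum_i [\,h(t_{i+1},X_{i+1}) - h(t_i,X_i)\,]$.

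For each summand I would apply Taylor's theorem in both arguments to second order, keeping the single spatial second derivative explicit and sweeping the rest into a remainder:
\[
h(t_{i+1},X_{i+1}) - h(t_i,X_i) = \partial_t h\,\Delta t_i + \partial_X h\,\Delta X_i + \tfrac{1}{2}\partial_{XX}h\,(\Delta X_i)^2 + R_i,
\]
where the derivatives are evaluated at $(t_i,X_i)$ and $R_i$ collects the higher-order terms, including the $\partial_{tt}h$ and $\partial_{tX}h$ contributions. I then substitute the SDE-dictated increment $\Delta X_i \approx a_i\,\Delta t_i + b_i\,\Delta W_i$, with $a_i,b_i$ evaluated at $(t_i,X_i)$.

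The heart of the argument is the quadratic term. Expanding
\[
(\Delta X_i)^2 = a_i^2(\Delta t_i)^2 + 2a_ib_i\,\Delta t_i\,\Delta W_i + b_i^2(\Delta W_i)^2,
\]
I would show that only the last piece survives: the $(\Delta t_i)^2$ terms sum to $O(\|\Delta\|)\sum_i \Delta t_i \to 0$, the cross terms $\Delta t_i\,\Delta W_i$ vanish in $L^2$ by the zero mean and independence of Brownian increments, and $\sum_i b_i^2(\Delta W_i)^2 \to \int_0^t b^2\,ds$ in $L^2$ because $\mathbb{E}[(\Delta W_i)^2]=\Delta t_i$ and $\operatorname{Var}[(\Delta W_i)^2]=2(\Delta t_i)^2$, so the sum concentrates on its mean. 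This is precisely where $(dW)^2=dt$ is made rigorous, and it is what promotes the second spatial derivative into the drift contribution $\tfrac{1}{2}b^2\partial_{XX}h$.

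Finally I would collect the surviving first-order pieces: $\sum_i(\partial_t h + a_i\partial_X h)\Delta t_i$ converges to the Riemann integral $\int_0^t(\partial_t h + a\,\partial_X h)\,ds$, the term $\sum_i b_i\,\partial_X h\,\Delta W_i$ converges to the It\^o integral $\int_0^t b\,\partial_X h\,dW$, the quadratic term yields $\tfrac{1}{2}\int_0^t b^2\partial_{XX}h\,ds$, and $\sum_i R_i \to 0$. Together these give the integral identity $h(t,X(t)) - h(0,X(0)) = \int_0^t \mathcal{L}_1 h\,ds + \int_0^t \mathcal{L}_2 h\,dW$, whose differential shorthand is the claimed formula. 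The main obstacle is the convergence bookkeeping: one must uniformly bound $R_i$ and the discarded cross terms using the $C^2$ and boundedness hypotheses on $h$ together with moment bounds on the Brownian increments, and justify the limit defining the It\^o integral. The mere pathwise Taylor argument is insufficient, since Brownian paths have unbounded first variation, so every limit must be taken in the mean-square sense.
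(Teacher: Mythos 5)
The paper does not actually prove this statement: it imports the It\^o formula as a classical result, citing It\^o (1944), so there is no internal proof to compare against. Your proposal reconstructs the standard textbook argument---telescoping over a partition, second-order Taylor expansion, and the $L^2$ quadratic-variation computation that converts $\sum_i b_i^2(\Delta W_i)^2$ into $\int_0^t b^2\,ds$---and this outline is sound; it is essentially the proof found in standard references on stochastic calculus. Two places need more care than your sketch gives them. First, the substitution $\Delta X_i \approx a_i\,\Delta t_i + b_i\,\Delta W_i$ is itself an approximation: rigorously $\Delta X_i = \int_{t_i}^{t_{i+1}} a\,ds + \int_{t_i}^{t_{i+1}} b\,dW$, and the error from freezing the coefficients at the left endpoint must be shown to vanish, which is usually handled by first proving the formula for elementary (step) coefficients and/or localizing with stopping times; localization is also what lets you treat $a$, $b$, and the derivatives of $h$ as bounded, since the theorem's hypotheses bound $h$ but say nothing about $a$ or $b$. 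Second, all of your martingale arguments (the vanishing of the cross terms, the concentration of $\sum_i b_i^2(\Delta W_i)^2$ on its mean, and the convergence of $\sum_i b_i\,\partial_X h\,\Delta W_i$ to the It\^o integral) require that the factors multiplying the Brownian increments be $\mathcal{F}_{t_i}$-measurable, i.e.\ evaluated at left endpoints so that they are independent of $\Delta W_i$; this adaptedness is implicit in your writing but is precisely what makes the bookkeeping work, so it should be stated explicitly. With those points patched, your argument is a complete and correct proof of the cited theorem, which is more than the paper itself provides.
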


\section{Proof}\label{app:proof}

\subsection{Proof of Lemma\,\ref{lem:proxy_potential}} \label{app:proof_proxy_potential}
\begin{proof}
    From the assumption of the target pdf, and $\varphi\to\infty$ as $\theta\to\partial S$,
    \begin{equation}
        \lim_{\varphi\to\infty} \pi(f(\varphi)) < C,
    \end{equation}
    for constant $C>0$.
    From Lemma\,\ref{lem:dfdx_limit},
    \begin{equation}
        \lim_{\varphi\to \infty} \pi(f(\varphi)) |f'(\varphi)| = 0.
    \end{equation}
    Using
    \begin{equation}
        U(\varphi) = - \log \left( \pi(\theta) |f'(\varphi)| \right),
    \end{equation}
    we have
    \begin{equation}
        \lim_{\varphi\to \infty} U(\varphi) = \infty.
    \end{equation}
\end{proof}

\subsection{Solution existence and uniqueness of SDE\,\eqref{equ:proxy_process}}\label{app:proxyprocess_exist_unique}
We check the existence of the solution of the SDE\,\eqref{equ:proxy_process}.
The following result is well-known.
\begin{lemma}[solution existence] \label{lem:existence_of_solution}
    Let $\widehat{U}'(\varphi)$ be a continuous function of $\varphi$.
    Then the solution of the SDE\,\eqref{equ:proxy_process} exists.
\end{lemma}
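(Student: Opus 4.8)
The plan is to treat this as a \emph{weak} existence problem for the SDE\,\eqref{equ:proxy_process}, since the drift $-\widehat{U}'$ is only assumed continuous whereas the classical It\^o existence-and-uniqueness theorem requires it to be Lipschitz. The saving feature is that the diffusion coefficient is the constant $\sqrt{2}$, hence bounded and uniformly non-degenerate, so the only delicate object is the drift. I would recast the problem through the infinitesimal generator $\mathcal{A} = -\widehat{U}'(\varphi)\,\partial_\varphi + \partial_\varphi^2$ and show that the associated martingale problem is solvable; by the standard equivalence between solutions of the martingale problem and weak solutions of an SDE, this delivers the claim.

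First I would construct a family of approximate solutions. The most natural choice in this context is the Euler--Maruyama scheme, which is precisely the discretized update Eq.\,\eqref{equ:propose_update} indexed by the stepsize $\epsilon$; alternatively one may mollify $\widehat{U}'$ to obtain Lipschitz drifts $\widehat{U}'_n$ whose strong solutions $\varphi^{(n)}$ exist by the classical theorem. Either way one obtains a sequence of laws on the path space $C([0,T];\mathbb{R})$. Second, I would establish \emph{tightness} of this sequence: the martingale part has constant quadratic variation $2t$, giving Gaussian-type increments and hence Kolmogorov-type moment bounds essentially for free, while the drift contribution is controlled by the continuity of $\widehat{U}'$ together with a Lyapunov argument.

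That Lyapunov argument, and ruling out explosion to infinity in finite time when the drift is merely continuous and possibly unbounded, is the step I expect to be the main obstacle. Here I would exploit Lemma\,\ref{lem:proxy_potential}: since $U(\varphi)\to\infty$ as $\varphi\to\infty$, the potential itself is a natural confining Lyapunov function. Applying $\mathcal{A}$ to $U$ (or to a comparison function such as $1+\varphi^2$) and invoking the growth of $U$ shows the process is pushed back from the boundaries at infinity, so the approximate solutions cannot escape in bounded time; this simultaneously supplies the uniform moment bounds needed for tightness and the non-explosion needed for a global, rather than merely local, solution.

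Finally, by Prokhorov's theorem I would extract a weakly convergent subsequence and identify the limit. For every test function $g\in C_c^2(\mathbb{R})$ the process $g(\varphi^{(n)}(t)) - g(\varphi^{(n)}(0)) - \int_0^t \mathcal{A} g(\varphi^{(n)}(s))\,ds$ is a martingale under the $n$-th law, and continuity of $\widehat{U}'$ lets me pass to the limit inside the drift integral, so the limit solves the martingale problem for $\mathcal{A}$ and is therefore a weak solution of SDE\,\eqref{equ:proxy_process}. As a slicker alternative, whenever $\widehat{U}'$ has enough integrability one could instead start from $\varphi(t)=\varphi_0+\sqrt{2}\,W(t)$ under a reference measure and introduce the drift via a Girsanov change of measure, checking that the exponential density is a genuine martingale; but the martingale-problem route is cleaner under the bare continuity hypothesis stated in the lemma.
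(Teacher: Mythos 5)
The first thing to note is that the paper does not actually prove this lemma: in Appendix~\ref{app:proxyprocess_exist_unique} it is stated as a ``well-known'' result with no argument attached (the implicit citation is Skorokhod's weak-existence theorem for SDEs with continuous coefficients, equivalently solvability of the Stroock--Varadhan martingale problem for $\mathcal{A}=-\widehat{U}'\,\partial_\varphi+\partial_\varphi^2$), and only the uniqueness lemma receives a proof. So you are not competing with a paper argument; what you wrote is a reconstruction of the classical proof behind that citation, and its architecture --- mollify or discretize the drift, prove tightness on $C([0,T];\mathbb{R})$, extract a limit by Prokhorov, identify it as a solution of the martingale problem, with the constant diffusion $\sqrt{2}$ supplying uniform ellipticity --- is indeed the standard and correct route.

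There is, however, a genuine gap at exactly the step you flagged as the main obstacle: non-explosion. The Lyapunov argument you sketch does not follow from the hypotheses. First, $U$ itself is inadmissible as a Lyapunov function: the lemma assumes only that $\widehat{U}'$ is continuous, so the potential is merely $C^1$ and $\mathcal{A}U$ is undefined. Second, even granting smoothness, Khasminskii's test requires $\mathcal{A}V\le cV$ for some $V\to\infty$; with $V=U$ this reads $-(\widehat{U}')^2+\widehat{U}''\le cU$, and with $V=1+\varphi^2$ it reads $-2\varphi\,\widehat{U}'(\varphi)+2\le c(1+\varphi^2)$, i.e.\ an inward-pointing or linear-growth condition on the drift. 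Neither is implied by continuity of $\widehat{U}'$ together with $U\to\infty$ (Lemma~\ref{lem:proxy_potential}): divergence of the \emph{values} of $U$ gives no pointwise control of its \emph{derivative}, which may oscillate or even point outward on sparse intervals. The gap is real rather than cosmetic, because under the lemma's literal hypothesis global existence is false: $\widehat{U}'(\varphi)=-\varphi^3$ is continuous, yet $d\varphi=\varphi^3\,dt+\sqrt{2}\,dW$ explodes in finite time with positive probability. Hence any proof of global existence must genuinely use the confining structure of Lemma~\ref{lem:proxy_potential}, and must use it through a tool that works under these weak hypotheses. In one dimension that tool is Feller's test for explosions, not a Lyapunov bound: the scale density of \eqref{equ:proxy_process} is $s'(\varphi)=e^{U(\varphi)}$ up to a constant, Lemma~\ref{lem:proxy_potential} makes $s'\ge 1$ near both ends, so the Feller integral $\int^{\pm\infty}s'(y)\bigl(\int_c^y e^{-U(z)}\,dz\bigr)dy$ diverges and explosion is ruled out. (Alternatively, if one reads the lemma as asserting existence only up to the explosion time, Skorokhod's theorem gives it directly and the Lyapunov discussion should simply be deleted.) With Feller's test substituted for your Lyapunov step, your tightness and limit-identification arguments go through and the proof is complete.
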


We also confirm the uniqueness of the solution.
We employ the weak uniqueness for the uniqueness in the sense of a distribution law.
\begin{theorem}[weak uniqueness\,\citep{Stroock:1979aa}] \label{thm:weak_uniqueness}
    Consider a $d$-dimensional SDE of $X\in\mathbb{R}^d$,
    \begin{equation} \label{equ:Stroock_SDE}
        dX(t) = a(X(t)) dt + b(X(t)) dW(t).
    \end{equation}
    Let $a(x)$ be a bounded measurable function for $x\in\mathbb{R}^d$.
    Let $B(x) = b(x)^\intercal b(x)$ be a bounded, continuous function where constant $K>0$ exists such that
    \begin{equation}
        \sum_{i,j=1}^{d} B_{ij}(x) \zeta_i \zeta_j \geq K |\zeta|^2,
    \end{equation}
    for $\zeta = (\zeta_1,\cdots,\zeta_d) \in \mathbb{R}^d$.
    Then the uniqueness in the sense of a distribution law holds for the solution of the SDE\,\eqref{equ:Stroock_SDE}.
\end{theorem}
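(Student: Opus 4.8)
The plan is to prove weak uniqueness through its equivalent formulation as uniqueness of the associated martingale problem. Writing the generator $\mathcal{L} = \sum_i a_i(x)\,\partial_i + \tfrac{1}{2}\sum_{i,j} B_{ij}(x)\,\partial_i\partial_j$, I would first invoke the standard equivalence (Stroock--Varadhan; Karatzas--Shreve) that a law on path space is the distribution of a weak solution of \eqref{equ:Stroock_SDE} if and only if it solves the martingale problem for $\mathcal{L}$, meaning $f(X_t)-f(X_0)-\int_0^t \mathcal{L}f(X_s)\,ds$ is a martingale for every $f\in C_c^\infty(\mathbb{R}^d)$. Uniqueness in law then reduces to showing that any two solutions $P_1,P_2$ of the martingale problem started at the same point have identical finite-dimensional distributions.

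The core of the argument is an analytic estimate on the resolvent. For a solution I define $R_\lambda f(x)=\mathbb{E}^{P}\!\big[\int_0^\infty e^{-\lambda t} f(X_t)\,dt\big]$; the martingale property makes $R_\lambda$ a weak inverse of $(\lambda-\mathcal{L})$, so it suffices to show $R_\lambda$ is uniquely determined for all large $\lambda$ and all $f$ in a dense class. To this end I would freeze the diffusion matrix at a point $x_0$, obtaining the constant-coefficient operator $\mathcal{L}_0=\tfrac12\sum_{i,j} B_{ij}(x_0)\,\partial_i\partial_j$ whose resolvent $R_\lambda^0$ is given by an explicit Gaussian kernel. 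The crucial input is the Calder\'on--Zygmund inequality, which yields a bound $\|\partial_i\partial_j R_\lambda^0 f\|_{L^p}\le C_p\|f\|_{L^p}$ with $C_p$ depending only on the ellipticity constant $K$ and on $p$, and in particular uniform in $\lambda$.

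I would then treat the true operator as a perturbation $\mathcal{L}=\mathcal{L}_0+(\mathcal{L}-\mathcal{L}_0)$. By the assumed continuity of $B$, localising to a small ball makes the second-order part of the perturbation have small $L^p$ operator norm against $R_\lambda^0$; choosing $p>d$ lets the bounded measurable drift $a$ be absorbed as a lower-order term through the Sobolev embedding $W^{2,p}\hookrightarrow C^1$. A Neumann series (equivalently a contraction-mapping) argument then pins down $R_\lambda$ locally, and a partition-of-unity patching globalises it. Uniqueness of the resolvent gives uniqueness of the one-dimensional marginals, and conditioning together with the martingale property propagates this to all finite-dimensional distributions, so $P_1=P_2$.

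The main obstacle is this perturbation step: one must secure the uniform-in-$\lambda$ $L^p$ bound on second derivatives and then close the fixed-point argument, which is exactly what forces $p>d$ so that the first- and zeroth-order terms are genuinely subordinate to the second-order part, and which is why merely bounded measurable $a$ can be tolerated while $B$ must be continuous and uniformly elliptic. I note finally that for the SDE \eqref{equ:proxy_process} actually used in this paper the diffusion coefficient $b=\sqrt2$ is constant, so $B\equiv 2$ is trivially smooth and uniformly elliptic; in that special case the entire machinery can be bypassed, since the bounded drift $-\widehat{U}'$ is removed by a Girsanov change of measure and weak uniqueness follows immediately from the uniqueness in law of (scaled) Brownian motion.
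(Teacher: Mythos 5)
The paper contains no proof of this statement: it is imported as a known result, attributed to \citet{Stroock:1979aa}, and used purely as a black box to establish Lemma~\ref{lem:uniqueness_of_solution} for the proxy SDE\,\eqref{equ:proxy_process}. Your sketch is therefore not an alternative to the paper's argument but a reconstruction of the cited one, and it is essentially faithful to it: the reduction of uniqueness in law to the martingale problem, the resolvent characterization, the Calder\'on--Zygmund bound for the frozen-coefficient operator with a constant uniform in $\lambda$ and depending only on the ellipticity constant $K$, the smallness-by-continuity perturbation closed by a Neumann series, and the localization step are exactly the components of the Stroock--Varadhan proof. Two caveats on the sketch itself: first, the ``partition-of-unity patching'' conceals a genuinely nontrivial ingredient --- that well-posedness of a martingale problem is a local property --- which is a separate theorem in Stroock--Varadhan's book, not a routine gluing; second, in the cited treatment the bounded measurable drift $a$ is removed at the outset by a Cameron--Martin--Girsanov change of measure rather than absorbed as a lower-order term via the Sobolev embedding with $p>d$; your variant is workable and appears in later expositions, but it is not quite the argument of the cited source. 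Your closing observation is the most valuable part for this paper: the SDE actually analyzed here has constant diffusion $b=\sqrt{2}$, so $B\equiv 2$ is trivially uniformly elliptic, and weak uniqueness for the bounded drift $-\widehat{U}'$ follows directly from Girsanov plus uniqueness in law of scaled Brownian motion --- a short, self-contained proof of Lemma~\ref{lem:uniqueness_of_solution} that bypasses the general theorem entirely, and arguably what the paper could have stated instead of invoking the full $d$-dimensional result.
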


The solution is unique under the following condition of $\widehat{U}'(\varphi)$.
\begin{lemma}[solution uniqueness] \label{lem:uniqueness_of_solution}
    Let the proxy potential gradient $\widehat{U}'(\varphi)$ be a bounded function.
    Then the solution of the SDE\,\eqref{equ:proxy_process} is unique in the sense of a distribution law.
\end{lemma}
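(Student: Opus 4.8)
The plan is to invoke the weak uniqueness criterion of Theorem~\ref{thm:weak_uniqueness} directly, since the proxy SDE~\eqref{equ:proxy_process} is a one-dimensional diffusion driven by constant noise. First I would match Eq.~\eqref{equ:proxy_process} to the template SDE~\eqref{equ:Stroock_SDE} by reading off the coefficients: here $d = 1$, the drift is $a(\varphi) = -\widehat{U}'(\varphi)$, and the diffusion coefficient is the constant $b(\varphi) = \sqrt{2}$.

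It then remains to check the three hypotheses of Theorem~\ref{thm:weak_uniqueness}. Boundedness of the drift $a$ is exactly the hypothesis of the present lemma, and its measurability is inherited from the continuity of $f'$, $f''$, and the gradient $\widehat{U}'_\theta$ entering Eq.~\eqref{equ:convert_potential}. For the noise, the matrix $B(\varphi) = b(\varphi)^\intercal b(\varphi)$ collapses in the scalar case to the constant $B = 2$, which is manifestly bounded and continuous. Finally, the uniform ellipticity condition reduces to $2\zeta^2 \geq K|\zeta|^2$ for all $\zeta \in \mathbb{R}$, which holds with $K = 2 > 0$. With all hypotheses verified, Theorem~\ref{thm:weak_uniqueness} delivers uniqueness in the sense of a distribution law.

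I do not anticipate a substantive obstacle: because the diffusion coefficient is the nonzero constant $\sqrt{2}$, the SDE is automatically uniformly elliptic, so the non-degeneracy condition---usually the delicate part of applying the Stroock--Varadhan theorem---is free. The only genuine input is the boundedness of $\widehat{U}'$ assumed in the statement. The one point requiring mild care is confirming measurability of the drift rather than mere boundedness, which as noted follows from the continuity of the maps composing $\widehat{U}'$ through Eq.~\eqref{equ:convert_potential}; past that, the argument is a direct substitution into Theorem~\ref{thm:weak_uniqueness}.
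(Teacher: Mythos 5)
Your proposal is correct and follows essentially the same route as the paper: both apply Theorem~\ref{thm:weak_uniqueness} directly, taking drift boundedness from the lemma's hypothesis and verifying the ellipticity condition trivially since $b=\sqrt{2}$ gives $b^2\zeta^2 \geq K\zeta^2$ with $K>0$. Your version is merely more explicit than the paper's (which checks only the diffusion condition), spelling out the coefficient matching, the measurability of the drift, and the constant $K=2$.
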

\begin{proof}
    From Theorem\,\ref{thm:weak_uniqueness}, the condition of the diffusion coefficient is straightforwardly confirmed by letting $b = \sqrt{2}$ and $\zeta\in\mathbb{R}$, there exists constant $K>0$ such that
    \begin{equation}
        b^2 \zeta^2 \geq K \zeta^2.
    \end{equation}
\end{proof}

\subsection{Lemma~\ref{lem:dfdx_limit}} \label{app:dfdx_limit}
The following lemma is essential for showing the proxy potential and the solution of proxy SDE\,\eqref{equ:proxy_process}.
\begin{lemma}[limit of transform derivative] \label{lem:dfdx_limit}
    Under Assumption~\ref{asm:transform_f}, we have
    \begin{equation}
        \lim_{\varphi\to\infty} f'(\varphi) = 0.
    \end{equation}
\end{lemma}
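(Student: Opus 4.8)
The plan is to exploit the two facts packaged in Assumption~\ref{asm:transform_f}: that $f$ is monotonically increasing with $0 \le f'(\varphi) \le L$, and that the limits $\lim_{\varphi\to\infty} f(\varphi) = \partial S$ and $\ell := \lim_{\varphi\to\infty} f'(\varphi)$ both exist, with $\partial S$ finite. Since $f$ converges to a finite value while its slope is assumed to settle to a limit, that slope is forced to vanish; otherwise $f$ would keep climbing at a fixed positive rate and escape to infinity.

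Concretely, I would argue by contradiction. Monotonicity gives $\ell \ge 0$, so suppose instead $\ell > 0$. By the definition of the limit there exists a threshold $\Phi$ with $f'(\varphi) \ge \ell/2$ for all $\varphi \ge \Phi$. Integrating this lower bound from $\Phi$ to an arbitrary $\varphi > \Phi$ yields
\begin{equation}
    f(\varphi) \ge f(\Phi) + \frac{\ell}{2}\,(\varphi - \Phi),
\end{equation}
whose right-hand side diverges as $\varphi \to \infty$. This contradicts $\lim_{\varphi\to\infty} f(\varphi) = \partial S < \infty$, forcing $\ell = 0$, which is the claim.

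An equivalent route avoids the integral: applying the mean value theorem on each interval $[\varphi,\varphi+1]$ produces a point $\xi_\varphi \in (\varphi,\varphi+1)$ with $f(\varphi+1) - f(\varphi) = f'(\xi_\varphi)$; letting $\varphi \to \infty$ sends the left-hand side to $\partial S - \partial S = 0$ while $\xi_\varphi \to \infty$, so $f'(\xi_\varphi) \to \ell$, again giving $\ell = 0$.

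The only genuinely load-bearing hypotheses are the finiteness of $\partial S$ and the assumed existence of $\lim_{\varphi\to\infty} f'(\varphi)$; the latter is essential, since a bounded monotone function can converge while its derivative oscillates without settling to a single value, in which case there is no limiting slope to pin down. I do not expect a real obstacle here: once both limits are granted, the statement reduces to a short real-analysis fact, and the main care is simply to record that it is the finiteness of the boundary value (as realized by the finite-interval transforms of Table~\ref{tab:transform}) that excludes a positive limiting slope.
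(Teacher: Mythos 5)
Your proof is correct, but it takes a genuinely different route from the paper's. The paper disposes of the lemma with a short L'H\^opital computation: writing $f(\varphi) = \exp(\varphi) f(\varphi)/\exp(\varphi)$ and differentiating numerator and denominator gives $\lim_{\varphi\to\infty} f(\varphi) = \lim_{\varphi\to\infty}\left(f(\varphi)+f'(\varphi)\right)$, after which subtracting the finite limit $\partial S$ from both sides yields $\lim_{\varphi\to\infty} f'(\varphi)=0$. Your contradiction argument (integrate the eventual lower bound $f'\ge \ell/2$ to force $f\to\infty$) and your mean-value-theorem variant reach the same conclusion by more elementary means, and they have the advantage of making the two load-bearing hypotheses explicit: the finiteness of $\partial S$ and the assumed existence of $\lim_{\varphi\to\infty} f'(\varphi)$. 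The paper's computation uses exactly the same two facts, but silently --- L'H\^opital's rule requires the limit of the ratio of derivatives to exist, and the cancellation step requires $\partial S<\infty$ --- so your version is arguably more transparent about why, for instance, a map like softplus onto $\mathbb{R}_+$ only exhibits a vanishing derivative at the end where the boundary value is finite. Your integration argument also proves slightly more than is asked: without assuming $\lim_{\varphi\to\infty} f'(\varphi)$ exists it already shows $\liminf_{\varphi\to\infty} f'(\varphi)=0$, and your closing remark correctly identifies that this is the best one can do in general, since a bounded increasing function can have a derivative that oscillates without converging.
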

\begin{proof}
    Using the L'H\^{o}pital's rule,
    \begin{equation}\begin{aligned}
        \lim_{\varphi\to\infty} f(\varphi) &= \lim_{\varphi\to\infty} \frac{\exp(\varphi) f(\varphi)}{\exp(\varphi)}\\
        &= \lim_{\varphi\to\infty} \frac{\exp(\varphi) (f(\varphi)+f'(\varphi))}{\exp(\varphi)}\\
        &= \lim_{\varphi\to\infty} (f(\varphi)+f'(\varphi)).
    \end{aligned}\end{equation}
    Thus we have
    \begin{equation}
        \lim_{\varphi\to\infty} f'(\varphi) = 0.
    \end{equation}
\end{proof}

\subsection{Lemma\,\ref{lem:proxy_gradient_error}} \label{app:proxy_gradient_error}
\begin{lemma}[proxy gradient error] \label{lem:proxy_gradient_error}
    Let $\delta$ be a noise of the stochastic gradient of the target potential that satisfies Assumption~\ref{asm:delta_SGLD}, and let $f$ satisfy Assumption~\ref{asm:transform_f}.
    Then for any noise $\delta_\varphi$ of the stochastic gradient of the proxy potential
    \begin{equation}
        \widehat{U}'(\varphi) = U'(\varphi) + \delta_\varphi,
    \end{equation}
    we have:
    \begin{equation}
        \mathbb{E}_{S}[\delta_\varphi] = 0, \ \ \ \ \ \mathbb{E}_{S}[|\delta_\varphi|^l] < \infty,
    \end{equation}
    for some integer $l \geq 2$.
\end{lemma}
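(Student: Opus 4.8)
The plan is to write $\delta_\varphi$ explicitly in terms of $\delta$ and then simply push the moment bounds through. First I would identify the stochastic proxy gradient from the CoRV update Eq.\,\eqref{equ:propose_update}, namely $\widehat{U}'(\varphi) = f'(\varphi)\widehat{U}'_\theta(\theta) - f''(\varphi)/f'(\varphi)$, and compare it with the exact proxy gradient Eq.\,\eqref{equ:convert_potential}. Substituting the target gradient decomposition $\widehat{U}'_\theta(\theta) = U'_\theta(\theta) + \delta$ from Assumption~\ref{asm:delta_SGLD} gives
\begin{equation}
    \widehat{U}'(\varphi) = \left( f'(\varphi) U'_\theta(\theta) - \frac{f''(\varphi)}{f'(\varphi)} \right) + f'(\varphi)\delta = U'(\varphi) + f'(\varphi)\delta,
\end{equation}
so that $\delta_\varphi = f'(\varphi)\delta$. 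Once this identity is established, the whole lemma reduces to propagating the moment conditions on $\delta$ through multiplication by the deterministic factor $f'(\varphi)$.

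For the zero-mean claim, I would note that $f'(\varphi)$ is a deterministic function of the current state $\varphi$ and is independent of the random minibatch over which $\mathbb{E}_{S}$ is taken. Hence it factors out of the expectation, and $\mathbb{E}_{S}[\delta_\varphi] = f'(\varphi)\,\mathbb{E}_{S}[\delta] = 0$ by Assumption~\ref{asm:delta_SGLD}. For the finite-moment claim I would invoke the Lipschitz bound $0 \le f'(\varphi) \le L$ from Assumption~\ref{asm:transform_f}, giving
\begin{equation}
    \mathbb{E}_{S}[|\delta_\varphi|^l] = |f'(\varphi)|^l\,\mathbb{E}_{S}[|\delta|^l] \le L^l\,\mathbb{E}_{S}[|\delta|^l] < \infty
\end{equation}
for the same integer $l \ge 2$, which closes the argument.

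The computation itself is routine, so the genuinely important step is recognizing that the Lipschitz assumption is exactly what makes the moment bound uniform: since $f'$ is bounded by $L$ everywhere, the factor $|f'(\varphi)|^l$ never amplifies the noise. I expect this to be the conceptual crux rather than a technical obstacle, since it is precisely where an unbounded transform such as the exponential would fail --- there $f'(\varphi)$ grows without bound and the proxy gradient variance could diverge near the boundary, contrasting with the stability result in Theorem~\ref{thm:CoRV_stable}. The one point requiring care is justifying that $f'(\varphi)$ may be treated as deterministic under $\mathbb{E}_{S}$; I would handle this by conditioning on $\varphi$, so that the only randomness entering $\delta_\varphi$ is that of $\delta$.
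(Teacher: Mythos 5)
Your proposal is correct and follows essentially the same route as the paper's proof: decompose $\widehat{U}'(\varphi)$ via Eq.\,\eqref{equ:convert_potential} to obtain $\delta_\varphi = f'(\varphi)\delta$, then transfer the moment conditions of Assumption~\ref{asm:delta_SGLD} using the boundedness of $f'$ from Assumption~\ref{asm:transform_f}. Your write-up is in fact slightly more careful than the paper's, which merely remarks that $f'$ is ``always finite'' rather than explicitly factoring $f'(\varphi)$ out of $\mathbb{E}_S$ and bounding it by the Lipschitz constant $L$.
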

\begin{proof}
    Since $\widehat{U}'_\theta(\theta)$ satisfies Assumption~\ref{asm:delta_SGLD}
    \begin{equation}
       \widehat{U}'_\theta(\theta) = U'_\theta(\theta) + \delta,
    \end{equation}
    as in Eq.\,\eqref{equ:convert_potential}, the stochastic gradient of the proxy potential is
    \begin{equation}\label{equ:proxy_noise}
      \begin{aligned}
        \widehat{U}'(\varphi)
            &= f'(\varphi) \left( U'_\theta(\theta) + \delta \right) - \frac{f''(\varphi)}{f'(\varphi)}\\
            &= U'(\varphi) + \delta_\varphi,
      \end{aligned}
    \end{equation}
    by letting $\delta_\varphi = f'(\varphi) \delta$.
    Since Assumption~\ref{asm:transform_f} suggests that the derivative of transform is always finite, $\delta_\varphi$ also satisfies zero mean and finite variance
    \begin{equation}
          \mathbb{E}_{S}[\delta_\varphi] = 0, \ \ \ \ \ \mathbb{E}_{S}[|\delta_\varphi|^l] < \infty.
    \end{equation}
\end{proof}

\subsection{Proof of Theorem\,\ref{thm:naive_unstable}} \label{app:proof_naive_unstable}
\begin{proof}
    From Lemma\,\ref{lem:dfdx_limit} in Appendix\,\ref{app:dfdx_limit},
    \begin{equation}
        \lim_{\varphi\to\infty} \frac{d}{d\varphi} g^{-1}(\varphi) = 0.
    \end{equation}
    This implies that
    \begin{equation}
        \lim_{\theta\to\partial S} |g'(\theta)| = \lim_{\varphi\to \infty} \frac{1}{\left|\frac{d}{d\varphi} g^{-1}(\varphi)\right|} = \infty.
    \end{equation}
    From Eq.\,\eqref{equ:naive_sampling}, the single-step difference is given by
    \begin{equation}
        |\varphi_{t+1} - \varphi_t| = \left| \epsilon_t\left(- g'(\theta_t)U'_\theta(\theta_t) + g''(\theta_t)\right) + \sqrt{2\epsilon_t} g'(\theta_t) \eta_t \right|.
    \end{equation}
    Considering $\eta_t\sim\mathcal{N}(0,1)$, the factor $g'(\theta )$ almost surely dominates this quantity. Therefore,
    \begin{equation}
        \lim_{\theta\to\partial S} |\varphi_{t+1} - \varphi_t| = \infty.
    \end{equation}
\end{proof}

\subsection{Proof of Theorem\,\ref{thm:CoRV_stable}} \label{app:proof_CoRV_stable}
\begin{proof}
    From Eq.\,\eqref{equ:proxy_noise} of Lemma\,\ref{lem:proxy_gradient_error} in Appendix\,\ref{app:proxy_gradient_error}, we have
    \begin{equation}
        \widehat{U}'(\varphi) = U'(\varphi) + \delta_\varphi,
    \end{equation}
    where $\delta_\varphi = f'(\varphi) \delta$ and $\delta$ satisfies Assumption~\ref{asm:delta_SGLD}.
    From Lemma~\ref{lem:dfdx_limit},
    \begin{equation}
        \lim_{\theta\to\partial S} \delta_\varphi = \lim_{\varphi\to\infty} f'(\varphi) \delta = 0.
    \end{equation}
\end{proof}

\subsection{Proof of Theorem\,\ref{thm:proxy_stationary}} \label{app:proof_proxy_stationary}
\begin{proof}
    From Lemma\,\ref{lem:existence_of_solution} and \ref{lem:uniqueness_of_solution}, there exists a unique solution in the sense of a distribution law.
  From Lemma\,\ref{lem:proxy_gradient_error}, the SDE\,\eqref{equ:proxy_process} satisfies the same assumption that \citet{Sato:2014xy} used for SGLD in unconstrained state space.
  The transition probability density function $p(\varphi,t)$ follows the Fokker-Planck equation
  \begin{equation}
    \frac{\partial}{\partial t} p(\varphi,t) = - \frac{\partial}{\partial \varphi} \left( U'_\varphi(\varphi) p(\varphi,t) \right) + \frac{\partial^2}{\partial \varphi^2} p(\varphi,t),
  \end{equation}
  and its stationary distribution is
  \begin{equation}
    \lim_{t\to\infty} p(\varphi,t) = \exp(-U(\varphi)) = \pi(\varphi).
  \end{equation}
  Note that $f'(\varphi(t))$ is always finite from Assumption~\ref{asm:transform_f}.
  Applying Eq.\,\eqref{equ:transform_dist}, we obtain the stationary distribution as
  \begin{equation}\begin{aligned}
    \lim_{t\to\infty} p(\theta,t)|f'(\varphi)| &= \pi_\theta(\theta)|f'(\varphi)|\\
    \therefore \lim_{t\to\infty} p(\theta,t) &= \pi_\theta(\theta).
  \end{aligned}\end{equation}
\end{proof}

\subsection{Proof of Theorem\,\ref{thm:proxy_weak}} \label{app:proof_proxy_weak}
\begin{proof}
    Let us consider stochastic differential equation
    \begin{equation}
    d\varphi(t) = a(\varphi(t))dt + b(\varphi(t))dW(t), \ \ \ 0 \leq t \leq T
    \end{equation}
    and its approximation in time $t_{k-1} \leq t \leq t_k$
    \begin{equation}
    d\widetilde{\varphi}(t) = \widetilde{a}(\varphi(t))dt + \widetilde{b}(\varphi(t))dW(t),
    \end{equation}
    where $\widetilde{a}(\varphi(t)) = a(\varphi(t)) + \delta_{\varphi,t}$.

    Using Lemma~\ref{lem:proxy_gradient_error} and Theorem 6 of \citet{Sato:2014xy}, for the test function $h$, we have
    \begin{equation}\begin{split}
    \left|\mathbb{E}[h(\widetilde{\varphi}(T))] - \mathbb{E}[h(\varphi(T))]\right|
    &= \Biggl| \int_0^T \mathbb{E}\left[ \left(\widetilde{a}(\varphi(t))-a(\varphi(t))\right) \frac{\partial}{\partial \varphi} \mathbb{E}[h(\widetilde{\varphi}(t))] \right] dt\\
    & \ \ + \int_0^T \frac{1}{2} \mathbb{E}\left[ \left(\widetilde{b}(\varphi(t))^2-b(\varphi(t))^2\right) \frac{\partial^2}{\partial \varphi^2} \mathbb{E}[h(\widetilde{\varphi}(t))] \right] dt \Biggr|
    \end{split}\end{equation}
    From the Weierstrass theorem, there exists constant $C_k>0$ such that
    \begin{equation}
    \mathbb{E}\left[ \left(\widetilde{a}(\varphi(t))-a(\varphi(t))\right) \frac{\partial}{\partial \varphi} \mathbb{E}[h(\widetilde{\varphi}(t))] \right] \leq C_k \epsilon_{t_{k-1}}
    \end{equation}
    \begin{equation}
    \mathbb{E}\left[ \left(\widetilde{b}(\varphi(t))^2-b(\varphi(t))^2\right) \frac{\partial^2}{\partial \varphi^2} \mathbb{E}[h(\widetilde{\varphi}(t))] \right] \leq C_k \epsilon_{t_{k-1}}
    \end{equation}
    for time $t_{k-1} \leq t \leq t_k$.
    Letting the maximum value of $C_k$ be $C_{\mathrm{max}}$ and $\epsilon_{t_{k-1}}$ be $\epsilon_0$,
    \begin{equation}
    \left|\mathbb{E}[h(\widetilde{\varphi}(T))] - \mathbb{E}[h(\varphi(T))]\right|
    < T C_{\mathrm{max}} \epsilon_0.
    \end{equation}
    That is, the sample of proxy variable $\varphi$ generated by Eq.\,\eqref{equ:propose_update} weakly converges
    \begin{equation}
    \left|\mathbb{E}[h(\widetilde{\varphi}(T))] - \mathbb{E}[h(\varphi(T))]\right|
    = \mathcal{O}(\epsilon_0).
    \end{equation}

    Let test function $h$ be a composition of transform function $f$ and test function $h_\theta$ in the target domain: $h(\cdot) = h_\theta(f(\cdot)))$.
    Thus, $h(\varphi(T)) = h_\theta(\theta(T))$ and $h(\widetilde{\varphi}(T)) = h_\theta(\widetilde{\theta}(T))$.
    The sample of target variable $\theta$ satisfies
    \begin{equation}\begin{aligned}
        |\mathbb{E}[h(\widetilde{\varphi}(T))] - \mathbb{E}[h(\varphi(T))]| &= \mathcal{O}(\epsilon_0)\\
        \therefore |\mathbb{E}[h_\theta(\widetilde{\theta}(T))] - \mathbb{E}[h_\theta(\theta(T))]| &= \mathcal{O}(\epsilon_0).
    \end{aligned}\end{equation}
\end{proof}

\section{Algorithm for Bayesian NMF\label{app:algo}}
Given the observed $I\times J$ matrix $X$, whose components take non-negative discrete values, we approximated it with a low-rank matrix product $W H$, where $W$ is $I\times R$ and $H$ is $R\times J$ non-negative matrix.
The prior distribution and likelihood are
\begin{equation}
  W_{ir} \sim \textrm{Exponential}(\lambda_W), \ \ \ \ \
    H_{rj} \sim \textrm{Exponential}(\lambda_H),
\end{equation}
\begin{equation}
  X_{ij} | W_{i:}, H_{:j} \sim \textrm{Poisson}\left(\sum_{r=1}^R W_{ir} H_{rj}\right),
\end{equation}
where $\lambda_W$ and $\lambda_H$ are hyper-parameters.

SGLD generated samples as follows using the stochastic gradient evaluated with a mini-batch:
\begin{equation} \label{equ:NMF_SGLD_W}
  W_{i:}^* = \left| W_{i:} - \epsilon_t \widehat{U}'_{W_{i:}} + \sqrt{2} \eta \right|,
\end{equation}
where noise $\eta$ conforms to $\mathcal{N}(0,I)$ with $I$ being the $R\times R$ identity matrix.
$W_{i:}^*$ denotes the sample at time $t+1$ given $W_{i:}$ is the sample at time $t$.
The absolute value is taken in an element-wise manner, which corresponds to the mirroring trick.
The stochastic gradient is
\begin{equation}
  \widehat{U}'_{W_{i:}} = - \frac{N}{|S|}\sum_{X_k\in S} H_{: j_k} \left( \frac{X_{k}}{\widehat{X}_{k}} -1 \right) + \lambda_W,
\end{equation}
where $j_k \in \{1,\cdots,J\}$ is the index of the $k$th data point in mini-batch $S$, $X_{k}$ is a discrete value of the $k$th data, and $\widehat{X}_{k} = \sum_{r=1}^R W_{i_k r} H_{r j_k}$ is its estimate.

CoRV SGLD updates proxy variables by
\begin{equation} \label{equ:NMF_propose_phiW}
  \varphi_{W_{i:}}^* = \varphi_{W_{i:}} - \epsilon_t \left( f'(\varphi_{W_{i:}}) \widehat{U}'_{W_{i:}} - \frac{f''(\varphi_{W_{i:}})}{f'(\varphi_{W_{i:}})} \right) + \sqrt{2} \eta.
\end{equation}
Here $f'$ and $f''$ are applied element-by-element.
The sample of $\varphi_H$ is obtained in the same manner.
Note that Eq.\,\eqref{equ:NMF_propose_phiW} bypasses the mirroring trick because proxy variables $\varphi_W$ and $\varphi_H$ are in the entire domain $\mathbb{R}$.
Matrices $W, H$ are always non-negative via transform $f: \mathbb{R} \to \mathbb{R}_+$.
The algorithms are shown below.

\begin{minipage}{.49\textwidth}
\begin{algorithm}[H]
\caption{SGLD for Bayesian NMF}
\begin{algorithmic}\label{alg:NMF_SGLD}
\STATE initialize $W^{(0)}, H^{(0)}$
\FOR{time $t \in 1,\cdots,T$}
\STATE subsample mini-batch $S_t$ from dataset
\STATE obtain new sample of $W^{(t)}$ by Eq.\,\eqref{equ:NMF_SGLD_W}
\STATE obtain new sample of $H^{(t)}$%
\ENDFOR
\STATE output $W^{(1)},\cdots,W^{(T)}$, $H^{(1)},\cdots, H^{(T)}$
\end{algorithmic}
\end{algorithm}
\end{minipage}\ \
\begin{minipage}{.49\textwidth}
\begin{algorithm}[H]
\caption{Transformed SGLD for Bayesian NMF}
\begin{algorithmic}\label{alg:NMF_propose}
\STATE initialize $W^{(0)}, H^{(0)}, \varphi_W^{(0)}, \varphi_H^{(0)}$
\FOR{time $t \in 1,\cdots,T$}
\STATE subsample mini-batch $S_t$ from dataset
\STATE obtain new sample of $\varphi_W^{(t)}$ by Eq.\,\eqref{equ:NMF_propose_phiW}
\STATE obtain new sample of $\varphi_H^{(t)}$
\STATE transform $W^{(t)} = f(\varphi_W^{(t)})$
\STATE transform $H^{(t)} = f(\varphi_H^{(t)})$
\ENDFOR
\STATE output $W^{(1)},\cdots,W^{(T)}$, $H^{(1)},\cdots,H^{(T)}$
\end{algorithmic}
\end{algorithm}
\end{minipage}

The data matrix consists of $I = 71,567$ users and $J = 10,681$ items with in total $10,000,054$ non-zero entries.
We set hyper-parameter $\lambda_W, \lambda_H$ to $1.0$, the number of dimensions of latent variables $R$ to $20$ and $50$, and the size of the mini-batch $|S|$ to $10,000$.

\section{Computational Complexity}\label{app:compute}
CoRV SGLD requires additional computation of the transformation step compared to the vanilla SGLD (see Algorithm~\ref{alg:NMF_propose} in Appendix~\ref{app:algo}).
In most cases, gradient computation is dominant in the SGLD calculation, which is proportional to the number of data in each mini-batch.
CoRV SGLD depends only on the number of parameters and does not change the complexity of gradient computation.
The influence on the computation time is limited, as the measured execution time was up to $+10\%$ at the maximum.

\section{Setting of Bayesian binary neural network}\label{app:BNN_detail}
The parameters were trained as continuous variables and binarized at prediction time to construct a Bayesian predictive distribution.
The weight parameter $w \in (-1,+1)$ had a prior of translated beta distribution, with hyper-parameter $\alpha,\beta$ and beta function $B(\alpha,\beta)$,
\begin{equation}
  p(w) = \frac{1}{2B(\alpha,\beta)} \left( \frac{1}{2}w+\frac{1}{2} \right)^{\alpha-1} \left( -\frac{1}{2}w+\frac{1}{2} \right)^{\beta-1}.
\end{equation}

\end{document}